\documentclass{article}

\usepackage[preprint]{neurips_2026}

\usepackage[utf8]{inputenc} 
\usepackage[T1]{fontenc}    
\usepackage{hyperref}       
\usepackage{url}            
\usepackage{booktabs}       
\usepackage{amsfonts}       
\usepackage{nicefrac}       
\usepackage{microtype}      
\usepackage{xcolor}         

\usepackage{bm}
\usepackage{graphicx}
\usepackage[table]{xcolor}
\usepackage{enumitem}

\usepackage{arydshln}
\usepackage{amsmath}
\usepackage{amssymb}
\usepackage{amsthm}
\usepackage{algorithm}
\usepackage{algpseudocode}
\usepackage{bbm}

\theoremstyle{plain}
\newtheorem{theorem}{Theorem}[section]
\newtheorem{proposition}[theorem]{Proposition}
\newtheorem{lemma}[theorem]{Lemma}

\theoremstyle{definition}
\newtheorem{definition}[theorem]{Definition}

\theoremstyle{remark}

\newcommand{\SDRL}{\textsc{Sdrl}}

\usepackage{wrapfig}
\usepackage{caption}
\algrenewcommand\algorithmicrequire{\textbf{Input:}}
\algrenewcommand\algorithmicensure{\textbf{Output:}}

\title{Learning from Self-Debate: Preparing Reasoning Models for Multi-Agent Debate}

\author{%
  Chenxi Liu, Yanshuo Chen, Ruibo Chen, Tianyi Xiong, Tong Zheng, Heng Huang \\
  Department of Computer Science\\
  University of Maryland, College Park\\
}

\begin{document}

\maketitle

\begin{abstract}
The reasoning abilities of large language models (LLMs) have been substantially improved by reinforcement learning with verifiable rewards (RLVR). At test time, collaborative reasoning through Multi-Agent Debate (MAD) has emerged as a promising approach for enhancing LLM performance. However, current RLVR methods typically train LLMs to solve problems in isolation, without explicitly preparing them to synthesize and benefit from different rationales that arise during debate. In this work, we propose \emph{Self-Debate Reinforcement Learning} (SDRL), a training framework where models learn from self-debate, equipping a single LLM with both strong standalone problem-solving ability and the capability to process diverse reasoning trajectories in MAD. Given a prompt, SDRL first samples multiple candidate solutions, then constructs a debate context with diverse reasoning paths and generates second-turn responses conditioned on this context. Finally, SDRL jointly optimizes both the initial and debate-conditioned responses, yielding a model that is effective as both a standalone solver and a debate participant. Experiments across multiple base models and reasoning benchmarks show that SDRL consistently improves MAD performance across diverse debate protocols and agent configurations, while simultaneously strengthening single-model reasoning. Code is available at: \href{https://github.com/DawnLIU35/SDRL}{\texttt{github.com/DawnLIU35/SDRL}}.
\end{abstract}

\section{Introduction}
\label{sec:introduction}

Large Language Models (LLMs) have recently demonstrated strong reasoning capabilities, achieving impressive performance on tasks such as mathematics, programming, and tool use~\cite{guo2025deepseek,yang2025qwen3,liu2025deepseek}. Reinforcement Learning with Verifiable Rewards (RLVR) plays a crucial role in enabling these capabilities by leveraging tasks with automatically checkable outcomes to provide reliable supervision for scaling reasoning. To further enhance LLM performance, a growing line of work~\cite{wang2022self, xiong2025llava,wang2025llava,zheng2025parallel} studies test-time scaling strategies to further improve the reasoning ability of LLMs. Among these approaches, collaborative reasoning~\cite{du2023improving,liang2024encouraging} through Multi-Agent Debate (MAD) has emerged as a promising paradigm: multiple LLM agents propose solutions to a shared question and iteratively update their answers in response to their peers~\cite{choi2025debate,yi2025debate}. 

Despite this promise, effective methodologies for preparing reasoning models for MAD interaction remain underexplored. Existing RLVR methods typically train LLMs to reason and then solve problems within single trajectories, failing to explicitly prepare them for collaborative environments where they must process diverse rationales. This mismatch between training and inference can limit LLM effectiveness in MAD settings~\cite{choi2025debate,kumaran2025overconfidence,li2025off}. Furthermore, recent attempts to train debate behaviors typically optimize the entire multi-agent systems, significantly increasing both training costs and deployment complexity~\cite{liao2025marft,liu2025llm,marti2025,li2025advancing}. These limitations highlight the need for a framework that produces a \emph{single} LLM capable of both independent problem-solving and productive collaboration in MAD.

In this work, we develop a theoretical analysis that clarifies why debate can help and when its benefits saturate. Inspired by the Dirichlet Compound Multinomial (DCM) framework of \citet{choi2025debate}, we model multi-agent debate as Bayesian belief updating and recover their key implication that standard MAD induces a martingale over each agent’s belief in the correct answer. We then extend this framework by explicitly disentangling the contributions of majority voting and agents’ \emph{private critique} in MAD, and show that improving agents’ private critique can improve overall MAD performance. Building on this insight, we propose \emph{Self-Debate Reinforcement Learning} (\SDRL{}), a learning paradigm that enables a single model to learn from its own reasoning conflicts, effectively preparing it for both standalone problem-solving and multi-agent interactions. Given a prompt, \SDRL{} first samples multiple candidate solutions from the current policy and evaluates them using verifiable rewards. \SDRL{} then constructs debate pairs by selecting different candidates from these initial solutions, forcing the model to confront divergent reasoning trajectories. Conditioned on a debate pair, the model generates second-round responses that deliberate over both candidates to produce a final answer. Finally, \SDRL{} jointly optimizes both the initial and second-round responses under verifiable rewards. This joint objective encourages the model to produce accurate first-pass solutions while also learning when and how to revise its reasoning during debate-style interaction, enabling a single trained model to serve as a highly competent agent within MAD.

Overall, our \textbf{contributions} can be summarized as follows:
\begin{enumerate}[itemsep=6pt, parsep=0pt, topsep=1pt]
    \item We introduce a theoretical framework that disentangles the contributions of debate and majority voting in the MAD paradigm, and we identify improving agents’ private critique as a key driver of debate gains.
    \item We propose \emph{Self-Debate Reinforcement Learning} (\SDRL{}), a training framework that equips a \emph{single} reasoning model with both independent problem-solving ability and collaborative debate skills by jointly optimizing both objectives.
    \item Extensive experiments across multiple reasoning benchmarks demonstrate that \SDRL{} consistently enhances multi-agent debate performance while simultaneously strengthening single-agent accuracy.
\end{enumerate}

\section{Preliminaries}
\label{preliminaries}

\paragraph{Multi-Agent Debate.}
We formulate the debate process as an iterative consensus-seeking mechanism among a set of $N$ language model agents, denoted by $\Pi = \{\pi_1, \dots, \pi_N\}$. Let $\mathcal{Q}$ represent the input space and $\mathcal{O}$ the output space. Each agent $\pi_i$ generates an independent response $o_{i,0} \sim \pi_i(q)$ for a given input $q \in \mathcal{Q}$.

In the \textit{Majority Voting} setting~\cite{zhao2025surveylargelanguagemodels}, these independent outputs are immediately aggregated via a voting function $\mathcal{V}: \mathcal{O}^N \rightarrow \mathcal{O}$ to produce the final prediction:
\begin{equation}
    o_0 = \mathcal{V}(o_{1,0}, \dots, o_{N,0}).
\end{equation}
Multi-Agent Debate (MAD) extends this static ensemble approach by introducing a $T$ round communication framework. Under the simultaneous-talk protocol~\cite{chan2023chateval,choi2025debate}, agents update their beliefs in parallel over discrete rounds $t \in \{1, \dots, T\}$. At step $t$, agent $\pi_i$ conditions its update on the set of peer responses generated in the previous round, formally defined as:
\begin{equation}
    \mathcal{O}_i^{(t-1)} = \{ o_{j,t-1} \mid j \in \mathcal{N}(i) \}.
\end{equation}
where $\mathcal{N}(i) \subseteq \{1, \dots, N\}$ denotes the neighborhood observable to agent $\pi_i$, including itself. The agent then refines its answer using a debate protocol $\mathcal{D}$, which integrates the original query $q$ and the peer context $\mathcal{O}_i^{(t-1)}$ to produce the updated response:
\begin{equation}
    o_{i,t} = \mathcal{D}\left(q; \mathcal{O}_i^{(t-1)}\right).
\end{equation}
This iterative refinement can be viewed as a functional composition over $T$ rounds. The state of agent $\pi_i$ at the final round is expressed as:
\begin{equation}
    o_{i,T} = (\mathcal{D} \circ \dots \circ \mathcal{D})(q; \mathcal{O}_i) = \mathcal{D}^{(T)}(q; \mathcal{O}_i).
\end{equation}
Finally, the system-level output is derived by aggregating the refined responses from the terminal state:
\begin{equation}
    o_T = \mathcal{V}(o_{1,T}, \dots, o_{N,T}).
\end{equation}
Following previous works~\cite{choi2025debate,yi2025debate}, we focus on homogeneous agent settings, where all $\pi_i$ share identical architectures.

\paragraph{Reinforcement Learning.}
To eschew the computational overhead of a separate critic network, we adopt the Group Relative Policy Optimization (GRPO) framework~\citep{shao2024deepseekmath,guo2025deepseek}. For a specific query $q$, the behavior policy $\pi_{\theta_\text{old}}$ samples a group of $G$ responses $\{o_i\}_{i=1}^G$. The advantage for the $i$-th response is estimated by normalizing the reward $r_i$ against the group statistics:
\begin{equation}
\label{eq:advantage}
\hat{A}_{i,t} = \frac{r_i - \text{mean}(\{r_i\}_{i=1}^G)}{\text{std}(\{r_i\}_{i=1}^G)}.
\end{equation}
Building on this, we employ Decoupled Clip and Dynamic sAmpling Policy Optimization (DAPO)~\citep{yu2025dapo} to stabilize updates for long chain-of-thought reasoning. DAPO mitigates entropy collapse and reward noise through asymmetric clipping and dynamic sampling constraints. The policy is optimized via the following token-level gradient objective:
{\small
\begin{equation}
\begin{aligned}
\mathcal{J}_{\text{DAPO}}(\theta) =\quad& \mathbb{E}_{(q,o^*)\sim \mathcal{Q}, \{o_i\}_{i=1}^G\sim \pi_{\theta_\text{old}}(\cdot\mid q)}
\Bigg[\frac{1}{\sum_{i=1}^{G}|o_i|}\sum_{i=1}^{G}\sum_{t=1}^{|o_i|} \\
&\hspace{-4em} \min \Big( r_{i,t}(\theta) \hat{A}_{i,t},  
\ \text{clip} \Big( r_{i,t}(\theta), 1 - {\varepsilon_{\text{low}}}, 1 + {\varepsilon_{\text{high}}} \Big) \hat{A}_{i,t} \Big) \Bigg]
\\
\text{s.t.}\quad& 0< \Big|\{o_i\mid\texttt{is\_equivalent}(o^*,o_i)\}\Big|< G,
\label{eq:dapoloss}
\end{aligned}
\end{equation}
}
where $r_{i,t}(\theta)=\frac{\pi_{\theta}(o_{i,t} \mid q, o_{i,<t})}{\pi_{\theta_{\text{old}}}(o_{i,t} \mid q,o_{i,<t})}$ is the importance ratio, and $o^*$ denotes the ground-truth answer.

\section{Theoretical Analysis}
\label{theoretical}

\paragraph{Motivation.}
Under the Dirichlet Compound Multinomial (DCM) model in \cite{choi2025debate}, the standard Bayesian debate induces a martingale over each agent's belief in the correct answer, suggesting that debate alone does not improve expected correctness.
A key gap is that real agents also apply \emph{private critique} after reading peers, and update their output based on their observation and critique~\cite{gou2023critic}.
We disentangle this effect with majority voting and show that
debate training improves multi-round debate by making this \emph{private critique} positively aligned with the ground truth,
thereby inducing a positive drift that breaks martingale neutrality.

\textbf{Setup.}
Fix an input question \(q\) and a finite answer set \(\mathcal{A}=\{1,\dots,K\}\). Without loss of generality, we set answer \(1\) as the correct one.
Following the DCM model~\cite{choi2025debate}, at round \(t\), agent \(i\) maintains Dirichlet parameters \(\alpha_{i,t}\in\mathbb{R}^K_{+}\) and observes neighbors \(N(i)\). Let $a(\cdot)$ extract the final answer choice from a response. We write $y_{i,t} := a(o_{i,t}) \in \mathcal{A}$ for the discrete answer label used in the DCM analysis.
We denote $L^1$-norm as \(\|v\|_1=\sum_{k=1}^K v^{(k)}\) and define the Dirichlet mean as \(\bar\theta_{i,t}:=\alpha_{i,t}/\|\alpha_{i,t}\|_1\in\Delta^{K}\).

\begin{definition}[Critique-augmented belief update]
\label{def:critique_update}
Let \(c_{i,t}\in\mathbb{N}^K\) be the neighbor count vector at round \(t\), 
\(c^{(k)}_{i,t}=\sum_{j\in N(i)}\mathbbm{1}\{y_{j,t-1}=k\}\).
In addition to neighborhood evidence \(c_{i,t}\), after observing the round-$t-1$ debate context, agent $i$ computes $\beta_{i,t-1}$
representing its \emph{private critique} from the debate context.
The belief update is
\begin{equation}
\alpha_{i,t} \;=\; \alpha_{i,t-1} \;+\; \beta_{i,t-1} \;+\; w_i\, c_{i,t},
\label{eq:update}
\end{equation}
where \(w_i\ge 0\) controls the strength of the social signal.
\end{definition}

\begin{definition}[Response generation via DCM]
\label{def:dcm}
Given \(\alpha_{i,t}\), the agent generates \(\theta_{i,t}\sim\mathrm{Dirichlet}(\alpha_{i,t})\)
and \(y_{i,t}\sim\mathrm{Categorical}(\theta_{i,t})\).
Marginally, \(\Pr(y_{i,t}=k\mid \alpha_{i,t})=\bar\theta^{(k)}_{i,t}\).
\end{definition}

Then the belief in the correct answer can be defined via Dirichlet mean:
\begin{equation}
p_{i,t} \;:=\; \bar\theta^{(1)}_{i,t} \;=\; \frac{\alpha^{(1)}_{i,t}}{\|\alpha_{i,t}\|_1}.
\label{eq:pt}
\end{equation}
We next let $\mathcal{F}_t$ denote the $\sigma$-algebra of information available at the start of round $t$ (see Appendix \ref{filtration} for the explicit definition).

\begin{definition}[Advantage of private critique]
\label{def:adv}
Assume \(\|\beta_{i,t}\|_1=m_\beta\) is constant (see Appendix \ref{beta_explained} for explanation).
The \emph{critique advantage} is
\begin{equation}
\delta_{i,t}\;:=\;\mathbb{E}\!\left[\beta^{(1)}_{i,t}\mid\mathcal{F}_{t}\right] \;-\; m_\beta\, p_{i,t}.
\label{eq:adv}
\end{equation}
Thus \(\delta_{i,t}>0\) means the critique allocates more belief to the correct answer in this round than a previous round's belief proportional to \(p_{i,t}\).
\end{definition}

\begin{theorem}[Critique induces belief drift]
\label{thm:drift}
Assume the mean-consistency condition \(\bar p_{N(i),t-1}=p_{i,t-1}\)
(the same condition under which standard MAD is a martingale in \cite{choi2025debate}). Let \(C_i:=m_\beta+w_i|N(i)|\), 
then
\begin{equation}
\mathbb{E}\!\left[p_{i,t}\mid \mathcal{F}_{t-1}\right]
=
p_{i,t-1}
+\frac{\delta_{i,t-1}}{\|\alpha_{i,t-1}\|_1+C_i}.
\label{eq:drift}
\end{equation}
\end{theorem}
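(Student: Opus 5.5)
The plan is a direct computation from the update rule \eqref{eq:update}. The key observation is that the denominator $\|\alpha_{i,t}\|_1$ is deterministic given $\mathcal{F}_{t-1}$. Then $\mathbb{E}[p_{i,t}\mid\mathcal{F}_{t-1}]$ is just the conditional expectation of the numerator $\alpha^{(1)}_{i,t}$ divided by a known constant.

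\textbf{Step 1: the denominator.} Taking $L^1$ norms in \eqref{eq:update} gives
\[
\|\alpha_{i,t}\|_1=\|\alpha_{i,t-1}\|_1+\|\beta_{i,t-1}\|_1+w_i\|c_{i,t}\|_1 .
\]
By Definition~\ref{def:adv}, $\|\beta_{i,t-1}\|_1=m_\beta$. Since $c_{i,t}$ counts one label per neighbor, $\|c_{i,t}\|_1=|N(i)|$. Hence $\|\alpha_{i,t}\|_1=\|\alpha_{i,t-1}\|_1+C_i$. Because $\alpha_{i,t-1}$ is $\mathcal{F}_{t-1}$-measurable, this quantity is fixed given $\mathcal{F}_{t-1}$.

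\textbf{Step 2: the numerator.} The first coordinate satisfies
\[
\alpha^{(1)}_{i,t}=\alpha^{(1)}_{i,t-1}+\beta^{(1)}_{i,t-1}+w_i c^{(1)}_{i,t}.
\]
We condition on $\mathcal{F}_{t-1}$ term by term. The first term equals $p_{i,t-1}\|\alpha_{i,t-1}\|_1$. The critique term has conditional expectation $m_\beta p_{i,t-1}+\delta_{i,t-1}$, by rearranging \eqref{eq:adv} at index $t-1$.

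For the social term we write $c^{(1)}_{i,t}=\sum_{j\in N(i)}\mathbbm{1}\{y_{j,t-1}=1\}$. By Definition~\ref{def:dcm}, $\Pr(y_{j,t-1}=1\mid\mathcal{F}_{t-1})=p_{j,t-1}$. Therefore $\mathbb{E}[c^{(1)}_{i,t}\mid\mathcal{F}_{t-1}]=|N(i)|\,\bar p_{N(i),t-1}$, where $\bar p_{N(i),t-1}$ is the neighborhood average belief. The mean-consistency assumption then replaces this with $|N(i)|\,p_{i,t-1}$.

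\textbf{Step 3: combine.} The conditional numerator becomes
\[
p_{i,t-1}\bigl(\|\alpha_{i,t-1}\|_1+m_\beta+w_i|N(i)|\bigr)+\delta_{i,t-1}=p_{i,t-1}\bigl(\|\alpha_{i,t-1}\|_1+C_i\bigr)+\delta_{i,t-1}.
\]
Dividing by the deterministic denominator $\|\alpha_{i,t-1}\|_1+C_i$ gives exactly \eqref{eq:drift}.

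\textbf{Main obstacle.} The substantive issue is measurability and indexing with respect to the filtration in Appendix~\ref{filtration}. Three things must hold. First, $\alpha_{i,t-1}$ and each $\alpha_{j,t-1}$ must be $\mathcal{F}_{t-1}$-measurable. Second, the labels $y_{j,t-1}$ must be drawn after $\mathcal{F}_{t-1}$, so their conditional law is the Dirichlet mean; this step implicitly uses $\alpha_{j,t-1}$ rather than an independent realization of $\theta$. Third, the $\delta_{i,t-1}$ defined by conditioning on $\mathcal{F}_{t-1}$ must be the same one appearing in the critique term. If $\beta_{i,t-1}$ is computed from the round-$(t-1)$ context, which includes the $y_{j,t-1}$, then taking $\mathbb{E}[\beta^{(1)}_{i,t-1}\mid\mathcal{F}_{t-1}]$ via the tower property still yields $\delta_{i,t-1}$ as defined. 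I would therefore check the filtration definition first and align these indices with it, and only then run the computation above.
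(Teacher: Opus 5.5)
Your proposal is correct and matches the paper's computation: it shows the denominator $\|\alpha_{i,t-1}\|_1+m_\beta+w_i|N(i)|$ is $\mathcal{F}_{t-1}$-measurable, computes $\mathbb{E}[c^{(1)}_{i,t}\mid\mathcal{F}_{t-1}]=|N(i)|\,\bar p_{N(i),t-1}$ from the DCM marginal, and rewrites $\alpha^{(1)}_{i,t-1}=p_{i,t-1}\|\alpha_{i,t-1}\|_1$. The only difference is organization: the paper first proves the general decomposition in Lemma~\ref{lem:drift_decomp}, keeping the neighborhood term $\frac{w_i|N(i)|}{Z_{i,t-1}}(\bar p_{N(i),t-1}-p_{i,t-1})$, and then uses mean-consistency to set it to zero, whereas you substitute mean-consistency directly into the numerator.
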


\begin{proof}[Proof sketch]
Expand \(p_{i,t}\) from \eqref{eq:update} and take conditional expectation.
Use \(\mathbb{E}[c^{(1)}_{i,t}\mid\mathcal{F}_{t-1}]=\sum_{j\in N(i)}p_{j,t-1}=|N(i)|\bar p_{N(i),t-1}\)
and \(\alpha^{(1)}_{i,t-1}=p_{i,t-1}\|\alpha_{i,t-1}\|_1\) to obtain Lemma~\ref{lem:drift_decomp}.
Under mean-consistency assumption, the neighborhood drift cancels, yielding \eqref{eq:drift}.
\end{proof}

\begin{lemma}[Accumulated drift and diminishing returns]
\label{lem:accum_drift}
Assume mean-consistency and \(\delta_{i,t}\ge \mu\) for \(t=0,\dots,T-1\).
Let \(S_{i,0}:=\|\alpha_{i,0}\|_1\),
then
\begin{equation}
\begin{aligned}
\mathbb{E}[p_{i,T}]
&\ge p_{i,0}
+ \mu\sum_{t=1}^{T}\frac{1}{S_{i,0}+tC_i}
\ge p_{i,0}
+ \frac{\mu}{C_i}\log\!\left(\frac{S_{i,0}+(T+1)C_i}{S_{i,0}+C_i}\right).
\end{aligned}
\label{eq:accum_drift_main}
\end{equation}
\end{lemma}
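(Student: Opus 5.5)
The plan is to iterate Theorem~\ref{thm:drift} along rounds and use the tower property of conditional expectation. The crucial preliminary observation is that the normalizer \(\|\alpha_{i,t}\|_1\) evolves \emph{deterministically}. By Definition~\ref{def:critique_update}, \(\|\alpha_{i,t}\|_1=\|\alpha_{i,t-1}\|_1+\|\beta_{i,t-1}\|_1+w_i\|c_{i,t}\|_1\). Here \(\|\beta_{i,t-1}\|_1=m_\beta\) by the assumption in Definition~\ref{def:adv}. Also \(\|c_{i,t}\|_1=\sum_k\sum_{j\in N(i)}\mathbbm{1}\{y_{j,t-1}=k\}=|N(i)|\), since every neighbor outputs exactly one label. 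Hence \(\|\alpha_{i,t}\|_1=\|\alpha_{i,t-1}\|_1+C_i\), and by induction \(\|\alpha_{i,t}\|_1=S_{i,0}+tC_i\). In particular, the denominator in \eqref{eq:drift} at the transition \(t-1\to t\) equals \(S_{i,0}+(t-1)C_i+C_i=S_{i,0}+tC_i\). This quantity is non-random, so it can be pulled out of expectations.

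Next I would take expectations of \eqref{eq:drift}, applying mean-consistency at every round \(t=1,\dots,T\). This gives \(\mathbb{E}[p_{i,t}]=\mathbb{E}[p_{i,t-1}]+\mathbb{E}[\delta_{i,t-1}]/(S_{i,0}+tC_i)\). Telescoping from \(t=1\) to \(T\) yields
\begin{equation*}
\mathbb{E}[p_{i,T}]=p_{i,0}+\sum_{t=1}^{T}\frac{\mathbb{E}[\delta_{i,t-1}]}{S_{i,0}+tC_i},
\end{equation*}
where \(p_{i,0}\) is treated as deterministic: it is a function of the initial \(\alpha_{i,0}\), with \(\mathcal{F}_0\) trivial or else conditioned upon. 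The hypothesis \(\delta_{i,t}\ge\mu\) for \(t=0,\dots,T-1\) then gives the first inequality, because each denominator is positive.

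For the second inequality I would use an integral comparison. The map \(x\mapsto 1/(S_{i,0}+xC_i)\) is decreasing on \([0,\infty)\). Hence for each \(t\ge1\) we have \(1/(S_{i,0}+tC_i)\ge\int_t^{t+1}dx/(S_{i,0}+xC_i)\). Summing over \(t=1,\dots,T\) gives
\begin{equation*}
\sum_{t=1}^{T}\frac{1}{S_{i,0}+tC_i}\ge\frac{1}{C_i}\log\frac{S_{i,0}+(T+1)C_i}{S_{i,0}+C_i}.
\end{equation*}
Multiplying this by \(\mu\) preserves the direction of the inequality only when \(\mu\ge0\). This is the intended regime of a positive critique advantage, so I would state \(\mu\ge 0\) explicitly for this step; if \(\mu<0\), only the first inequality survives. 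I also implicitly need \(C_i>0\), which holds whenever \(m_\beta>0\) or \(w_i|N(i)|>0\).

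The only delicate point I anticipate is the first step. Justifying that \(\|\alpha_{i,t}\|_1\) is deterministic is what allows the denominator in \eqref{eq:drift} to be removed from the outer expectation. Without it, one would have \(\mathbb{E}[\delta/\text{(random)}]\), and no clean telescoping would be available. Beyond that step, the argument is routine bookkeeping: mean-consistency must be assumed at each of the \(T\) rounds, not just one. The logarithmic form then transparently exhibits the diminishing returns, since the per-round increment decays like \(1/t\).
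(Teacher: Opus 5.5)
Your proposal is correct and follows essentially the same route as the paper: the closed form \(\|\alpha_{i,t-1}\|_1=S_{i,0}+(t-1)C_i\) makes the denominator deterministic, so iterating Theorem~\ref{thm:drift} via the tower property telescopes, and the integral comparison \(\sum_{t=1}^{T}f(t)\ge\int_{1}^{T+1}f(u)\,du\) for the decreasing \(f(u)=1/(S_{i,0}+uC_i)\) gives the logarithmic bound. Your caveat that the second inequality needs \(\mu\ge 0\) (and \(C_i>0\)) is valid; the paper's proof leaves it implicit.
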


\begin{proposition}[Training increases critique advantage]
\label{prop:training_adv}
Suppose debate training yields \(\delta_{i,t}\ge \mu>0\) on the evaluation-time debate distribution for early rounds.
Then Theorem~\ref{thm:drift} and Lemma~\ref{lem:accum_drift} imply \(\mathbb{E}[p_{i,t}]\) increases with \(t\).
\end{proposition}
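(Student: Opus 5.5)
The plan is to get monotonicity directly from the one-step drift identity of Theorem~\ref{thm:drift} by the tower property. Lemma~\ref{lem:accum_drift} then supplies a quantitative version of the same statement. Fix an agent $i$ and a round $t\ge 1$ among the early rounds on which the hypothesis $\delta_{i,t-1}\ge\mu>0$ holds. Theorem~\ref{thm:drift} applies under mean-consistency, which is the standing assumption carried over from the earlier results. Taking unconditional expectations of \eqref{eq:drift} gives
\begin{equation*}
\mathbb{E}[p_{i,t}] \;=\; \mathbb{E}[p_{i,t-1}] \;+\; \mathbb{E}\!\left[\frac{\delta_{i,t-1}}{\|\alpha_{i,t-1}\|_1+C_i}\right].
\end{equation*}
So it suffices to show that the last expectation is strictly positive.

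For that, I will bound the denominator. By \eqref{eq:update}, each increment $\beta_{i,t-1}+w_i c_{i,t}$ is nonnegative and has $L^1$-norm exactly $m_\beta+w_i|N(i)|=C_i$, since $\|\beta\|_1=m_\beta$ and $\|c_{i,t}\|_1=|N(i)|$. Hence $\|\alpha_{i,t-1}\|_1=S_{i,0}+(t-1)C_i$ is deterministic. The denominator therefore equals the constant $S_{i,0}+tC_i$, which is finite and positive. It follows that
\begin{equation*}
\mathbb{E}[p_{i,t}]-\mathbb{E}[p_{i,t-1}] \;=\; \frac{\mathbb{E}[\delta_{i,t-1}]}{S_{i,0}+tC_i}\;\ge\;\frac{\mu}{S_{i,0}+tC_i}\;>\;0.
\end{equation*}
This gives strict increase at every such round. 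Summing over $t=1,\dots,T$ recovers Lemma~\ref{lem:accum_drift}, which shows that the cumulative gain over $p_{i,0}$ is positive and grows logarithmically in $T$.

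The main obstacle is interpretive rather than computational. The hypothesis is stated ``on the evaluation-time debate distribution for early rounds,'' and it has to be matched to the pointwise (or almost-sure) condition $\delta_{i,t}\ge\mu$ used in Theorem~\ref{thm:drift} and Lemma~\ref{lem:accum_drift}. I would read it as holding almost surely on that distribution for $t=0,\dots,T-1$, where $T$ indexes the early rounds. Monotonicity is then claimed only for $t\le T$. Under the weaker reading, $\mathbb{E}[\delta_{i,t}]\ge\mu$, the argument above still goes through, because the denominator is deterministic and linearity of expectation is all that is needed.

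I will also note that mean-consistency must hold at each of these rounds, since it is what cancels the neighborhood drift. Without it, Lemma~\ref{lem:drift_decomp} would leave an additional social term of indeterminate sign.
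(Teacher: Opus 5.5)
Your proposal is correct and follows essentially the same route as the paper: take expectations of the one-step identity in Theorem~\ref{thm:drift}, use the deterministic normalizer $\|\alpha_{i,t-1}\|_1+C_i=S_{i,0}+tC_i$ to get $\mathbb{E}[p_{i,t}]\ge\mathbb{E}[p_{i,t-1}]+\mu/(S_{i,0}+tC_i)>0$, and invoke Lemma~\ref{lem:accum_drift} for the accumulated gain. Your added remark is also valid: because the denominator is deterministic, only $\mathbb{E}[\delta_{i,t-1}]\ge\mu$ is needed, a mild strengthening that the paper does not state.
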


\begin{lemma}[Correlation shrinks the effective ensemble size]
\label{lem:corr_neff}
Consider a single debate round with $N$ agents producing answers
$Y_1,\dots,Y_N \in \{1,\dots,K\}$.
Assume each $Y_n$ has the same marginal distribution $p\in\Delta^K$
with gap $\Delta := p_1-p_2>0$ (\cite{choi2025debate} Theorem 1).
Let $\hat p$ be the empirical distribution and
$y_{\mathrm{mv}}=\arg\max_k \hat p_k$ be the majority vote.

Let the correlation parameter be
\[
\rho := \max_{k\in[K]}\ \max_{a\neq b}\ 
\frac{\mathrm{Cov}(\mathbbm{1}\{Y_a=k\},\mathbbm{1}\{Y_b=k\})}{p_k(1-p_k)} \in [0,1].
\]
with the assumption that $p_k \in (0,1)$ for all $k$.
Then the plurality-vote error probability admits the bound
\[
\begin{aligned}
\Pr(y_{\mathrm{mv}}\neq 1)
&\le \frac{K(1+(N-1)\rho)}{N\,\Delta^2}
= \frac{K}{N_{\mathrm{eff}}\Delta^2},\\
N_{\mathrm{eff}} &:= \frac{N}{1+(N-1)\rho}.
\end{aligned}
\]
\end{lemma}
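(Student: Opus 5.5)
We plan to reduce the plurality-vote error to deviations of the individual empirical frequencies $\hat p_k$ from their means, and then control each deviation with Chebyshev's inequality, using a variance computation in which $\rho$ appears.

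\textbf{Reduction to coordinatewise deviations.} If $y_{\mathrm{mv}}\neq 1$, there is some $k\neq 1$ with $\hat p_k\ge \hat p_1$. Since $p_1-p_k\ge \Delta$ for every $k\neq 1$ (answer $2$ is the runner-up), this forces either $\hat p_1\le p_1-\Delta/2$ or $\hat p_k\ge p_k+\Delta/2$. Consequently,
\[
\{y_{\mathrm{mv}}\neq 1\}\subseteq \bigcup_{k=1}^K \Bigl\{|\hat p_k-p_k|\ge \tfrac{\Delta}{2}\Bigr\}.
\]
A union bound and Chebyshev then give
\[
\Pr(y_{\mathrm{mv}}\neq 1)\le \sum_{k=1}^K \frac{4\,\mathrm{Var}(\hat p_k)}{\Delta^2}.
\]

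\textbf{Variance with correlation.} Write $\hat p_k=\frac1N\sum_n \mathbbm{1}\{Y_n=k\}$. Each indicator has variance $p_k(1-p_k)$, and by the definition of $\rho$ each pairwise covariance is at most $\rho\,p_k(1-p_k)$. Therefore
\[
\mathrm{Var}(\hat p_k)\le \frac{1}{N^2}\Bigl(N p_k(1-p_k)+N(N-1)\rho\, p_k(1-p_k)\Bigr)=\frac{p_k(1-p_k)\,(1+(N-1)\rho)}{N}.
\]
Using $p_k(1-p_k)\le 1/4$ cancels the factor $4$ from Chebyshev. Summing over the $K$ coordinates gives $K(1+(N-1)\rho)/(N\Delta^2)$, which equals $K/(N_{\mathrm{eff}}\Delta^2)$ by the definition of $N_{\mathrm{eff}}$.

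\textbf{Main obstacle.} The delicate step is the constant bookkeeping. Splitting the gap as $\Delta/2$ introduces a factor $4$, and this factor must be absorbed exactly by $p_k(1-p_k)\le 1/4$ to land on the stated constant $K$ rather than $4K$. The loose step is the union bound over all $K$ coordinates, which we simply accept.

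Ties should be counted as errors, i.e.\ we use $\hat p_k\ge\hat p_1$; this is consistent with the non-strict inequalities above. We also need a negative covariance to be bounded by $\rho\,p_k(1-p_k)$. This holds because $\rho$ is defined as a maximum over pairs, so every normalized covariance is at most $\rho$; the stated range $\rho\ge 0$ is only used for interpreting $N_{\mathrm{eff}}$.
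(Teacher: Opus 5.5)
Your proof is correct, but it takes a different route from the paper's. The paper first uses the deterministic fact (Lemma~\ref{lem:mv_implies_l1}) that a plurality error forces $\|\hat p-p\|_1\ge\Delta$. It converts this to $\|\hat p-p\|_2\ge\Delta/\sqrt{K}$ by Cauchy--Schwarz and applies Markov's inequality once to the scalar $\|\hat p-p\|_2^2$, whose mean equals $\sum_k\mathrm{Var}(\hat p_k)$. It then finishes with the same correlated-variance bound you derive, together with $\sum_k p_k(1-p_k)\le 1$.

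So in the paper the factor $K$ is the price of the $\ell_1$-to-$\ell_2$ conversion. In your argument it is the price of the union bound, and the $4$ from halving the gap is cancelled by $p_k(1-p_k)\le 1/4$. The paper's version reuses an existing lemma and needs a single concentration step with no union bound.

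Yours is more elementary, and it actually gives more than the statement. If you keep the sum rather than bounding each $p_k(1-p_k)$ by $1/4$, you get $\Pr(y_{\mathrm{mv}}\neq 1)\le \frac{4}{N_{\mathrm{eff}}\Delta^2}\sum_k p_k(1-p_k)\le\frac{4(1-1/K)}{N_{\mathrm{eff}}\Delta^2}$. Since $(K-2)^2\ge 0$, we have $4(1-1/K)\le K$ for all $K\ge 2$, so your route yields a $K$-independent bound that implies the stated one. For comparison, the paper's route refined the same way gives the constant $K-1$, which is better only for $K\le 3$.

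One small correction to your closing remark. Replacing $p_k(1-p_k)$ by $1/4$ inside $p_k(1-p_k)(1+(N-1)\rho)$ does require $1+(N-1)\rho\ge 0$, so $\rho\ge 0$ is not used only for interpreting $N_{\mathrm{eff}}$. This is covered by the stated assumption $\rho\ge 0$. It also holds automatically: $\mathrm{Var}\bigl(\sum_n\mathbbm{1}\{Y_n=k\}\bigr)\ge 0$ forces the average normalized covariance, and hence $\rho$, to be at least $-1/(N-1)$.
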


\paragraph{Role of majority vs.\ private critique.}
Our analysis disentangles the contributions of \emph{majority voting} and \emph{private critique} in multi-round debate.
When there is no training signal from critique (\emph{i.e.,} $\delta_{i,t}=0$), Theorem~\ref{thm:drift} reduces to a neutral (martingale) belief evolution, and the end-to-end accuracy is governed purely by vote amplification—recovering the standard majority-voting perspective (\emph{e.g.,} \cite{choi2025debate}).
\SDRL{} helps by increasing the critique advantage $\delta$, which induces positive drift in each agent’s correctness; however, Lemma~\ref{lem:accum_drift} shows that even under sustained advantage ($\delta_{i,t}\ge \mu$), the improvement accumulates only logarithmically in the number of rounds and yields diminishing per-round gains.
At the same time, as rounds progress agents condition on increasingly similar contexts, raising answer correlation and effectively reducing the ensemble size (Lemma~\ref{lem:corr_neff}), which weakens majority-vote amplification.
Together, these two effects explain a common empirical pattern: performance improves in early rounds (positive critique drift with low correlation) but peaks quickly and can decline later as marginal drift shrinks and homogeneity erodes the benefits of voting.

\section{Methodology}
\label{methodology}

We propose Self-Debate Reinforcement Learning (\SDRL{}), an online reinforcement learning framework that trains a \emph{single} policy to improve both (i) single-agent reasoning and (ii) \emph{private critique} required by Multi-Agent Debate (MAD). Section~\ref{subsec:single_agent_reasoning} introduces how \SDRL{} uses verifiable rewards to train the model for single-agent reasoning. Section~\ref{subsec:debate_ability} introduces how \SDRL{} equips a single model with the private critique ability to discriminate between different opinions during debate interactions.

\subsection{Single-agent reasoning}
\label{subsec:single_agent_reasoning}

To improve the model’s reasoning ability in the single-agent setting, \SDRL{} follows the GRPO framework~\cite{shao2024deepseekmath} and optimizes the policy using verifiable rewards. For each prompt $q$, \SDRL{} samples $n$ independent responses  $\{o_i\}_{i=1}^n \sim \pi_\theta(\cdot \mid q)$ from the current policy and assigns a sparse outcome reward $r_j \in \{+1,-1\}$ based on the correctness of the final answer. We then compute advantages for each response using Eq.~\ref{eq:advantage}. To prepare candidate responses for debate training, we follow DAPO~\cite{yu2025dapo} to oversample prompts and filter out those whose responses yield zero advantages. This ensures that each prompt has responses corresponding to different final answers, capturing diverse opinions and reasoning trajectories.

\begin{algorithm}[t]
\footnotesize
\caption{Self-Debate Reinforcement Learning (\SDRL{})}
\label{alg:sdrl}
\vspace{-0.4em}
\begin{algorithmic}[1]
\Require Policy $\pi_\theta$, prompt set $\mathcal{D}=\{q_i\}_{i=1}^N$, initial rollouts $n$, debate rollouts $n_d$, training steps $K$
\Ensure Updated policy $\pi_{\theta_{\mathrm{updated}}}$

\For{$t=1,\dots,K$}
  \State Sample mini-batch $\mathbf{q}\subseteq\mathcal{D}$ and initialize $\mathbf{B}\leftarrow\emptyset$
  \For{each $q\in\mathbf{q}$}
    \State Sample initial rollouts $\{o_i\}_{i=1}^n\sim\pi_\theta(\cdot\mid q)$ and compute advantages $\{A_i\}_{i=1}^n$
    \State $\mathbf{B}\leftarrow \mathbf{B}\cup\{(q,\{o_i\}_{i=1}^n,\{A_i\}_{i=1}^n)\}$
    \State Select debate pair $(o^1,o^2)$ from $\{o_i\}_{i=1}^n$ and construct debate prompt $q_d$
    \State Sample debate rollouts $\{o_j^d\}_{j=1}^{n_d}\sim\pi_\theta(\cdot\mid q_d)$ and compute advantages $\{A_j^d\}_{j=1}^{n_d}$
    \State $\mathbf{B}\leftarrow \mathbf{B}\cup\{(q_d,\{o_j^d\}_{j=1}^{n_d},\{A_j^d\}_{j=1}^{n_d})\}$
  \EndFor
  \State Update $\pi_\theta$ using the RL optimizer on $\mathbf{B}$
\EndFor
\State \Return $\pi_{\theta_{\mathrm{updated}}}$
\end{algorithmic}
\vspace{-0.3em}
\end{algorithm}

\subsection{Debate training}
\label{subsec:debate_ability}

To improve the model's private critique ability, \SDRL{} constructs debate contexts that expose the policy to conflicting reasoning trajectories. Producing a correct final answer in these contexts requires the model to exercise private critique by distinguishing reliable reasoning from flawed trajectories. Given the initial rollout set for a prompt $q$, we construct a debate pair by selecting two responses from the candidate response pool, which is formed from the first-round responses generated for single-agent reasoning. Selecting two responses to form a debate pair is a computationally efficient design choice that induces the fundamental private critique behavior and generalizes to settings with more candidate responses.

\textbf{Debate pair construction.}
Let $\mathcal{O}(q)=\{o_i\}_{i=1}^n$ denote the initial rollouts, which we use as the candidate pool, and let $a(o_i)$ be the extracted final answer of rollout $o_i$.
\SDRL{} constructs a debate pair $\mathcal{P}(o^1, o^2)$ by selecting two candidates from $\mathcal{O}(q)$ using one of two pairing rules:
\begin{enumerate}
    \item \emph{Random pairing}. It samples two rollouts uniformly from $\mathcal{O}(q)$.
    This induces a broad distribution of pair diversity.
    Some pairs exhibit genuine disagreement, while others share the same final answer but differ in their reasoning trajectories, which can provide useful training signals for confirmation rather than unconditional revision.
    \item \emph{Frequency-based pairing}. It first identifies the most common answer $a_1$ and the second most common answer $a_2$ in $\{a(o_i)\}_{i=1}^n$.
    This rule more consistently exposes the model to the dominant competing beliefs expressed by its current policy, yielding sharper disagreement instances and therefore requiring stronger private critique ability.
\end{enumerate}
 We evaluate both constructions and use the same downstream prompting and optimization for either choice. 

\textbf{Debate prompt formulation.}
Given a selected pair $\mathcal{P}(o^1, o^2)$, we build a debate prompt $q_d$ by serializing the two candidate responses into a two-turn conversation, and then prompt the model to deliberate over both responses and produce a final answer. The detailed conversation format and prompt template are shown in the Appendix. For \emph{frequency-based pairing}, we randomize the order of $a_1$ and $a_2$ to prevent positional heuristics. Since the most common answer $a_1$ typically has a higher probability of being correct, always placing $a_1$ as the first response in $q_d$ could encourage the model to select the first-position answer rather than perform critical analysis of the underlying reasoning trajectories.

\section{Experiments}
\label{experiments}

\subsection{Implementation Details}
\label{sec:implementation}

\subsubsection{Training details}
 Following recent works~\citep{zheng2025first,cheng2025reasoning,zheng2025parallelr1parallelthinkingreinforcement}, We employ Qwen2.5-3B~\cite{qwen2025qwen25technicalreport}, Qwen3-4B-Base and Qwen3-8B-Base~\cite{yang2025qwen3} as our backbone models. Following works~\citep{yu2025dapo,cheng2025reasoning,cui2025entropy}, we utilize the DAPO-Math-17K dataset~\citep{yu2025dapo} for training. Given its demonstrated stability and superiority over vanilla GRPO~\citep{yu2025dapo,cheng2025reasoning}, we adopt the DAPO algorithm~\cite{yu2025dapo} as both our baseline and the core optimization method for our \SDRL{} approach. 

We configure the learning rate at $1 \times 10^{-6}$ with a linear warm-up over the first 10 rollout steps. The rollout phase utilizes a prompt batch size of $256$, generating $8$ responses per prompt. For each debate pair, we generate $4$ responses for Qwen2.5-3B and $8$ responses for Qwen3-4B-Base and Qwen3-8B-Base. We utilize a sparse reward signal, assigning $+1$ for correct solutions and $-1$ otherwise. All training experiments are implemented within the verl framework~\citep{sheng2024hybridflow}. Additional configuration details are provided in the Appendix.

\subsubsection{Evaluation}

\textbf{Benchmarks.} Following prior work~\cite{choi2025debate,zhao2025majority,li2025off}, we conduct evaluations on the mathematical reasoning benchmarks MATH500~\citep{hendrycks2021measuring}, AMC 2023, and AIME 2024/2025.

\textbf{Multi-Agent Debate.} Consistent with established MAD literature~\citep{choi2025debate}, we primarily evaluate under the \textit{Decentralized MAD} setting~\cite{du2023improving,choi2025debate}, where each agent observes all peer responses from the previous round. Additional ablation studies on more debate frameworks are provided in Section~\ref{further_analysis}. We benchmark performance against the strong \textit{Majority Voting} baseline, which selects the most common answer from the initial round without debate. Our main experiments utilize $N=5$ agents and $T=1$ debate round; ablations on $N$ and $T$ are provided separately. Inference is conducted with a rollout temperature of $1.0$ and top-$p$ sampling ($p = 0.9$). For all MAD experiments, we report the average score across $5$ independent runs.

\textbf{Single-Agent Reasoning.} Following recent RL reasoning literature~\citep{guo2025deepseek,liu2025explore}, we also assess model performance without debate. For each prompt, we sample $K$ independent responses, and report the average accuracy ($mean@K$) and majority-vote accuracy ($maj@K$), setting $K=32$ for the AMC and AIME datasets, and $K=4$ for the MATH500 benchmarks. Further evaluation details are available in the Appendix.

\begin{table*}[t]
  \caption{Multi-agent debate performance comparison of different model architectures between the DAPO baseline and \SDRL{} across math reasoning benchmarks. The debate system contains $5$ agents. \textit{Maj} denotes the majority-vote accuracy of the agents' direct responses to the question. \textit{Debate} denotes the performance of the decentralized multi-agent system after debate round $1$. $\Delta$ is the difference between \textit{Maj} and \textit{Debate}. Best results are bolded.}
  \label{tab:main-multi}
  \begin{center}
    \resizebox{\textwidth}{!}{
      \begin{tabular}{lccccccccccccccc}
        \toprule
        Method
          & \multicolumn{3}{c}{MATH500}
          & \multicolumn{3}{c}{AMC23}
          & \multicolumn{3}{c}{AIME24}
          & \multicolumn{3}{c}{AIME25}
          & \multicolumn{3}{c}{Avg.} \\
        \cmidrule(lr){2-4}\cmidrule(lr){5-7}\cmidrule(lr){8-10}\cmidrule(lr){11-13}\cmidrule(lr){14-16}
          & Maj & Debate & $\Delta$
          & Maj & Debate & $\Delta$
          & Maj & Debate & $\Delta$
          & Maj & Debate & $\Delta$
          & Maj & Debate & $\Delta$ \\
        \midrule
        \multicolumn{16}{c}{Qwen2.5-3B} \\
        \midrule
        \emph{DAPO}               & 70.9 & 71.1 & 0.2  & 49.0 & 50.0 & 1.0  & 7.3 & 10.0 & 2.7  & 4.0 & 3.3 & -0.7 & 32.8 & 33.6 & 0.8 \\
        + \SDRL{}-rand        & 70.7 & 71.4 & \textbf{0.7}  & 49.2 & \textbf{53.8} & \textbf{4.6}  & 8.7 & 11.7 & 3.0  & \textbf{5.6} & \textbf{6.1} & 0.5  & 33.6 & \textbf{35.8} & \textbf{2.2} \\
        + \SDRL{}-freq         & \textbf{71.4} & \textbf{71.7} & 0.3  & \textbf{52.5} & 53.5 & 1.0  & \textbf{9.3} & \textbf{13.3} & \textbf{4.0}  & 3.3 & 4.7 & \textbf{1.4} & \textbf{34.1} & \textbf{35.8} & 1.7 \\
        \midrule
        \multicolumn{16}{c}{Qwen3-4B-Base} \\
        \midrule
        \emph{DAPO}               & 86.9 & 83.1 & -3.8 & \textbf{73.0} & 76.0 & 3.0 & 26.7 & 28.3 & 1.6 & 24.7 & 24.0 & -0.7 & 52.8 & 52.9 & 0.1 \\
        + \SDRL{}-rand        & \textbf{88.8} & \textbf{86.6} & -2.2 & 72.5 & \textbf{80.0} & \textbf{7.5} & \textbf{30.8} & 31.7 & 0.9 & 22.7 & 24.0 & \textbf{1.3} & 53.7 & 55.6 & 1.9 \\
        + \SDRL{}-freq         & 87.0 & 85.9 & \textbf{-1.1} & 72.5 & 79.0 & 6.5 & 28.7 & \textbf{36.0} & \textbf{7.3} & \textbf{28.7} & \textbf{30.0} & \textbf{1.3} & \textbf{54.2} & \textbf{57.7} & \textbf{3.5} \\
        \midrule
        \multicolumn{16}{c}{Qwen3-8B-Base} \\
        \midrule
        \emph{DAPO}               & 86.6 & 85.3 & -1.3 & 79.0 & 77.5 & -1.5 & 28.0 & 26.7 & -1.3 & 23.3 & 24.7 & 1.4 & 54.2 & 53.6 & -0.6 \\
        + \SDRL{}-rand     & 86.2 & 86.5 & \textbf{0.3} & \textbf{81.5} & 86.5 & 5.0 & 31.0 & 34.8 & 3.8 & \textbf{26.7} & 32.0 & 5.3 & 56.4 & 60.0 & 3.6 \\
        + \SDRL{}-freq     & \textbf{88.0} & \textbf{88.2} & 0.2 & 80.0 & \textbf{88.0} & \textbf{8.0} & \textbf{33.3} & \textbf{37.9} & \textbf{4.6} & 25.6 & \textbf{32.2} & \textbf{6.6} & \textbf{56.7} & \textbf{61.6} & \textbf{4.9} \\
        \bottomrule
      \end{tabular}
    }
  \end{center}
  \vskip -0.1in
\end{table*}

\subsection{Multi-Agent Debate}
\label{debate_exp}
Table~\ref{tab:main-multi} compares \SDRL{} with the DAPO baseline under the decentralized MAD setting with $N{=}5$ agents and one debate round. \SDRL{}-rand denotes random pairing, and \SDRL{}-freq denotes frequency-based pairing for debate pair construction. The main observation is that \SDRL{} consistently improves multi-agent debate across model architectures. The strongest results are achieved by \SDRL{}-freq, which gives the best average \textit{Debate} accuracy for Qwen3-4B-Base and Qwen3-8B-Base, and matches the best result for Qwen2.5-3B.

For Qwen2.5-3B, \SDRL{} improves the post-debate performance from $33.6$ to $35.8$ on average. \SDRL{}-rand gives the largest average debate gain, increasing $\Delta$ from $0.8$ to $2.2$. \SDRL{}-freq reaches the same best average \textit{Debate} accuracy of $35.8$ and gives the strongest AIME24 result, improving \textit{Debate} from $10.0$ to $13.3$ and $\Delta$ from $2.7$ to $4.0$. These results show that \SDRL{} makes even the smaller model more effective in decentralized debate.

The advantage of \SDRL{}-freq becomes clearer on stronger backbones. For Qwen3-4B-Base, \SDRL{}-freq improves the average \textit{Maj} accuracy from $52.8$ to $54.2$ and the average \textit{Debate} accuracy from $52.9$ to $57.7$. It also increases the average debate gain from $0.1$ to $3.5$. The largest improvement appears on AIME24, where \textit{Debate} increases from $28.3$ to $36.0$. For Qwen3-8B-Base, \SDRL{}-freq further strengthens this trend. It improves the average \textit{Maj} accuracy from $54.2$ to $56.7$, raises the average \textit{Debate} accuracy from $53.6$ to $61.6$, and increases the average debate gain to $\Delta=4.9$. It also achieves the best post-debate result on most benchmark for this backbone.

Overall, these results demonstrate that \SDRL{} trains reasoning LLMs to better participate in multi-agent debate. The improvement appears in both the quality of the initial response pool and the effectiveness of the debate step. Among the two pairing strategies, \SDRL{}-freq provides the most reliable gains, especially on stronger models and challenging benchmarks. This suggests that frequency-based pairing creates more informative debate contexts, allowing agents to better compare competing solutions, revise erroneous trajectories, and convert debate into consistent performance gains.

\begin{table*}[t]
  \caption{Single-agent performance comparison of different model architectures between the DAPO baseline and \SDRL{} across several math reasoning benchmarks.  We report the average accuracy $\mathrm{mean}@K$ and the majority-vote accuracy $\mathrm{maj}@K$ over $K$ independent runs. Best results are bolded.}
  \label{tab:main-single}
  \begin{center}
    \begin{small}
        \setlength{\tabcolsep}{4.5pt}
        \resizebox{\textwidth}{!}{%
        \begin{tabular}{lcccccccccc}
          \toprule
          Method
            & \multicolumn{2}{c}{MATH500}
            & \multicolumn{2}{c}{AMC23}
            & \multicolumn{2}{c}{AIME24}
            & \multicolumn{2}{c}{AIME25}
            & \multicolumn{2}{c}{Avg.} \\
          \cmidrule(lr){2-3}\cmidrule(lr){4-5}\cmidrule(lr){6-7}\cmidrule(lr){8-9}\cmidrule(lr){10-11}
            & mean@4 & maj@4
            & mean@32 & maj@32
            & mean@32 & maj@32
            & mean@32  & maj@32
            & mean & maj \\
          \midrule
          \multicolumn{11}{c}{Qwen2.5-3B} \\
          \midrule
          \emph{DAPO}      & 64.5 & 70.4 & 48.5 & 56.8 & 7.2 & 13.5 & 3.1 & 4.2 & 30.8 & 36.2 \\
          + \SDRL{}-rand      & 65.1 & 70.6 & 48.1 & 57.4 & \textbf{9.9} & 13.9 & \textbf{4.8} & \textbf{8.0} & 32.0 & \textbf{37.5} \\
          + \SDRL{}-freq       & \textbf{65.5} & \textbf{71.0} & \textbf{49.9} & \textbf{58.0} & 9.7 & \textbf{15.5} & 4.1 & 4.9 & \textbf{32.3} & 37.4 \\
          \midrule
          \multicolumn{11}{c}{Qwen3-4B-Base} \\
          \midrule
          \emph{DAPO}      & 82.9 & 84.4 & 71.9 & 81.4 & 24.8 & 31.1 & 24.9 & 27.8 & 51.1 & 56.2 \\
          + \SDRL{}-rand      & \textbf{83.7} & 85.8 & 72.3 & \textbf{84.1} & 25.9 & \textbf{35.1} & 20.4 & 27.6 & 50.6 & 58.2 \\
          + \SDRL{}-freq       & 82.2 & \textbf{86.2} & \textbf{74.2} & 83.0 & \textbf{26.6} & 34.4 & \textbf{26.6} & \textbf{37.2} & \textbf{52.4} & \textbf{60.2} \\
          \midrule
          \multicolumn{11}{c}{Qwen3-8B-Base} \\
          \midrule
          \emph{DAPO}      & 83.2 & 85.4 & 77.7 & 86.8 & 26.1 & 33.0 & 23.6 & 26.6 & 52.7 & 58.0 \\
          + \SDRL{}-rand   & 83.4 & 86.6 & 78.3 & 90.5 & 30.1 & 38.4 & \textbf{27.3} & \textbf{35.8} & 54.8 & 62.8 \\
          + \SDRL{}-freq   & \textbf{85.4} & \textbf{88.4} & \textbf{80.6} & \textbf{93.6} & \textbf{32.6} & \textbf{43.4} & 26.6 & 34.4 & \textbf{56.3} & \textbf{65.0} \\
          \bottomrule
        \end{tabular}
        }
    \end{small}
  \end{center}
  \vskip -0.1in
\end{table*}

\subsection{Single-Agent Reasoning}
\label{single_exp}

We follow prior reasoning LLM evaluation protocols~\cite{yu2025dapo} and evaluate each model as a single agent. Table~\ref{tab:main-single} reports both $\mathrm{mean}@K$ and $\mathrm{maj}@K$ across benchmarks. Overall, \SDRL{} improves single-agent reasoning across all three model architectures. This shows that training with \SDRL{} not only prepares models for debate, but also strengthens standalone problem-solving ability.

For Qwen2.5-3B, \SDRL{}-rand increases the average $\mathrm{mean}@K$ from $30.8$ to $32.0$ and the average $\mathrm{maj}@K$ from $36.2$ to $37.5$. \SDRL{}-freq further improves the average $\mathrm{mean}@K$ to $32.3$ and reaches a comparable average $\mathrm{maj}@K$ of $37.4$. For Qwen3-4B-Base, \SDRL{}-freq gives the strongest overall results, improving the average $\mathrm{mean}@K$ from $51.1$ to $52.4$ and the average $\mathrm{maj}@K$ from $56.2$ to $60.2$.
The advantage of \SDRL{}-freq becomes more pronounced on Qwen3-8B-Base. It improves the average $\mathrm{mean}@K$ from $52.7$ to $56.3$ and the average $\mathrm{maj}@K$ from $58.0$ to $65.0$, achieving the best average result among all methods. It also obtains the best performance on MATH500, AMC23, and AIME24. In particular, AIME24 $\mathrm{maj}@32$ improves from $33.0$ to $43.4$, and AMC23 $\mathrm{maj}@32$ improves from $86.8$ to $93.6$.

Overall, the single-agent results demonstrate that \SDRL{} improves reasoning LLMs beyond the multi-agent setting. By jointly optimizing initial and debate-conditioned responses, \SDRL{} improves the quality of direct responses while preserving the capabilities needed for effective debate. Among the two pairing strategies, \SDRL{}-freq provides the strongest overall performance, suggesting that frequency-based debate construction also benefits standalone reasoning.

\subsection{Further Analysis}
\label{further_analysis}
As \SDRL{} with frequency-based pairing yields better overall performance than random pairing, we conduct additional experiments using \SDRL{}+freq. More experiments, including the measurement of private critique $\delta$, can be found in Appendix.

\textbf{Debate Rounds.} We evaluate more debate rounds in the decentralized MAD setting with $5$ agents, and report the results in Figure~\ref{fig:round}. Compared with the DAPO baseline, \SDRL{} achieves higher accuracy across all debate rounds. Moreover, \SDRL{} typically yields larger improvements as the number of rounds increases. Except for MATH500, the debate performance of both DAPO and \SDRL{} increases in the early rounds and then declines, which is consistent with our theoretical analysis that debate improves in
early rounds but peaks quickly and can decline later. More analysis on MATH500 can be found in the Appendix.

\begin{figure*}[!t]
  \begin{center}
    \centerline{\includegraphics[width=0.8\textwidth]{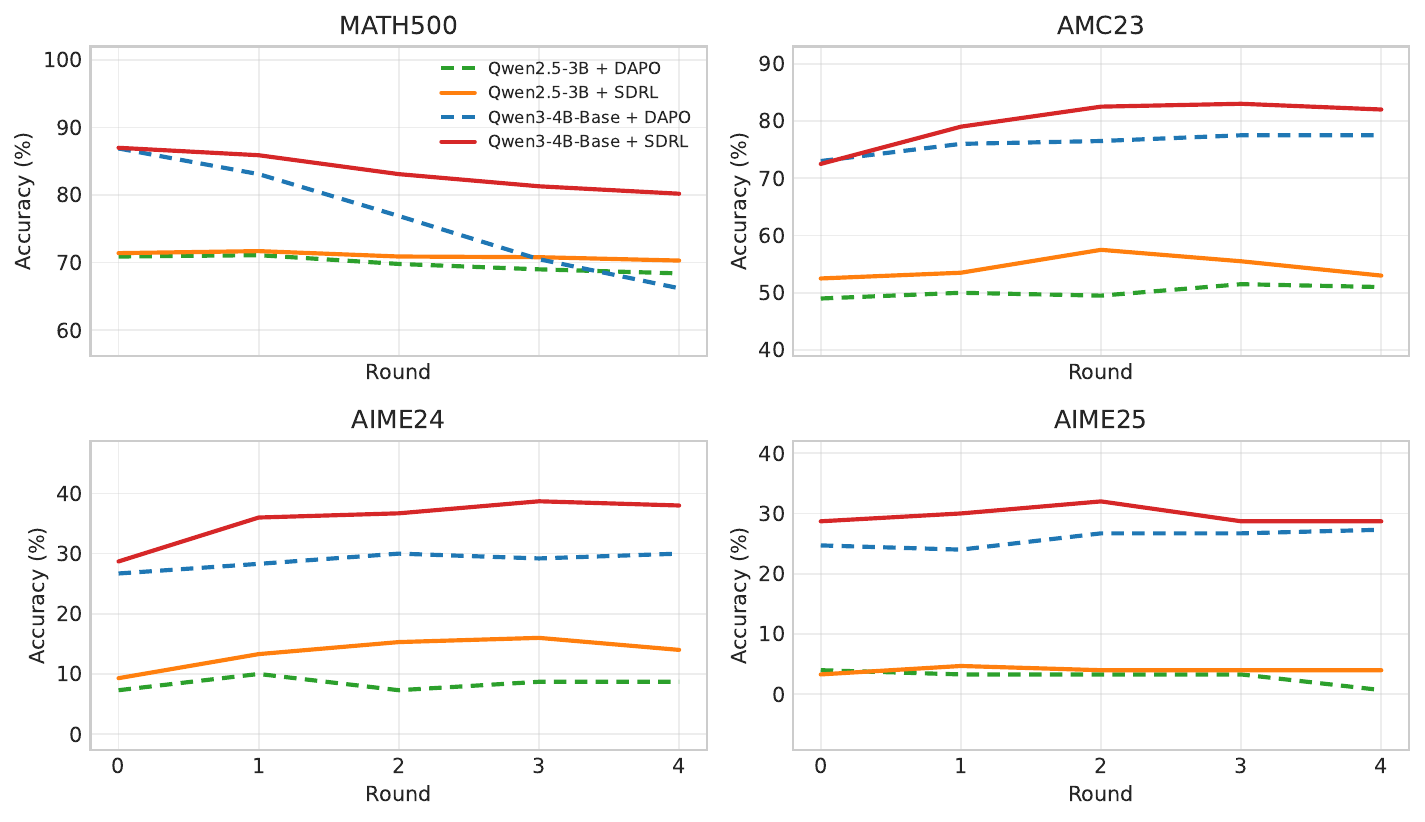}}
    \caption{
      Debate performance of $5$ agents in different debate rounds under the decentralized MAD setting.
    }
    \label{fig:round}
  \end{center}
  \vspace{-25pt}
\end{figure*}

\textbf{Number of Agents.} We report experimental results for the decentralized MAD system with varying numbers of agents under a single debate round, as shown in Table~\ref{tab:num_agents}. Overall, \SDRL{} consistently outperforms the DAPO baseline across all agent settings. Increasing the number of agents from $3$ to $5$ leads to consistent improvements in average debate performance. However, in the $7$-agent setting on AMC23, AIME24, and AIME25, performance drops because we reduce the max response tokens during generation to keep the debate sequence within the model context window, which results in truncated and incomplete responses.

\textbf{Additional Debate Frameworks.} Following~\cite{choi2025debate}, we evaluate \SDRL{} under additional debate frameworks. These include Sparse MAD~\cite{li2024improving}, a variant of decentralized MAD that uses a sparse communication topology to improve efficiency, and Centralized MAD~\cite{guo2024large}, where a central agent aggregates peers' responses and produces an updated response at each round. The results are reported in Table~\ref{tab:debate-framework}. Overall, \SDRL{} outperforms the DAPO baseline in both debate accuracy and the improvement brought by debate across all settings, demonstrating the robustness of \SDRL{}-trained models under different debate frameworks.

\begin{table*}[!t]
  \caption{Ablation on the number of agents $N$ in decentralized MAD setting. \textit{Maj} denotes the majority-vote accuracy of the agents' direct responses to the question. \textit{Debate} denotes the performance of the decentralized multi-agent system after debate round $1$. $\Delta$ is the difference between \textit{Maj} and \textit{Debate}.}
  \label{tab:num_agents}
  \begin{center}
    \resizebox{\textwidth}{!}{%
      \begin{tabular}{lccccccccccccccc}
        \toprule
        Method
          & \multicolumn{3}{c}{MATH500}
          & \multicolumn{3}{c}{AMC23}
          & \multicolumn{3}{c}{AIME24}
          & \multicolumn{3}{c}{AIME25}
          & \multicolumn{3}{c}{Avg.} \\
        \cmidrule(lr){2-4}\cmidrule(lr){5-7}\cmidrule(lr){8-10}\cmidrule(lr){11-13}\cmidrule(lr){14-16}
          & Maj & Debate & $\Delta$
          & Maj & Debate & $\Delta$
          & Maj & Debate & $\Delta$
          & Maj & Debate & $\Delta$
          & Maj & Debate & $\Delta$ \\
        \midrule
        \multicolumn{16}{c}{\textbf{Qwen2.5-3B}} \\
        \midrule
        DAPO (N=3) & 67.5 & 68.2 & 0.7  & 48.0 & 50.0 & 2.0 & 4.0 & 6.7 & 2.7 & 1.3 & 1.3 & 0.0 & 30.2 & 31.6 & 1.4 \\
        DAPO (N=5) & 70.9 & 71.1 & 0.2  & 49.0 & 50.0 & 1.0  & 7.3 & 10.0 & 2.7  & 4.0 & 3.3 & -0.7 & 32.8 & 33.6 & 0.8 \\
        DAPO (N=7) & 72.4 & 71.9 & -0.5 & 49.0 & 51.0 & 2.0 & 6.7 & 9.3 & 2.7 & 3.3 & 2.7 & -0.7 & 32.9 & 33.7 & 0.9 \\
        +\SDRL{} (N=3) & 66.2 & 67.3 & 1.1  & 51.0 & 55.0 & 4.0 & 11.3 & 13.3 & 2.0 & 1.3 & 4.7 & 3.3 & 32.5 & 35.1 & 2.6 \\
        +\SDRL{} (N=5) & 71.4 & 71.7 & 0.3  & 52.5 & 53.5 & 1.0  & 9.3 & 13.3 & 4.0  & 3.3 & 4.7 & 1.4  & 34.1 & 35.8 & 1.7 \\
        +\SDRL{} (N=7) & 71.5 & 71.9 & 0.4  & 53.0 & 55.0 & 2.0 & 12.0 & 14.7 & 2.7 & 4.7 & 6.0 & 1.3 & 35.3 & 36.9 & 1.6 \\
        \midrule
        \multicolumn{16}{c}{\textbf{Qwen3-4B-Base}} \\
        \midrule
        DAPO (N=3) & 85.2 & 82.8 & -2.4 & 68.0 & 70.5 & 2.5 & 20.7 & 25.3 & 4.7 & 24.7 & 23.3 & -1.3 & 49.6 & 50.5 & 0.9 \\
        DAPO (N=5)    & 86.9 & 83.1 & -3.8 & 73.0 & 76.0 & 3.0 & 26.7 & 28.3 & 1.6 & 24.7 & 24.0 & -0.7 & 52.8 & 52.9 & 0.1 \\
        DAPO (N=7) & 87.7 & 84.0 & -3.7 & 72.0 & 76.0 & 4.0 & 23.3 & 26.0 & 2.7 & 23.3 & 24.7 & 1.3 & 51.6 & 52.7 & 1.1 \\
        +\SDRL{} (N=3) & 84.4 & 83.6 & -0.7 & 70.0 & 76.7 & 6.7 & 24.7 & 34.0 & 9.3 & 20.0 & 23.3 & 3.3 & 49.8 & 54.4 & 4.7 \\
        +\SDRL{} (N=5)    & 87.0 & 85.9 & -1.1 & 72.5 & 79.0 & 6.5 & 28.7 & 36.0 & 7.3 & 28.7 & 30.0 & 1.3 & 54.2 & 57.7 & 3.5 \\
        +\SDRL{} (N=7) & 88.1 & 85.9 & -2.2 & 70.0 & 81.7 & 11.7 & 23.3 & 32.0 & 8.7 & 22.7 & 28.7 & 6.0 & 51.0 & 57.1 & 6.1 \\
        \bottomrule
      \end{tabular}%
    }
  \end{center}
  \vskip -0.1in
\end{table*}

\begin{table*}[t]
  \caption{Multi-agent debate performance under different debate settings. The debate system contains $5$ agents. \textit{Maj} denotes the majority-vote accuracy of the agents' direct responses to the question. \textit{Debate} denotes the performance of the decentralized multi-agent system after debate round $1$. $\Delta$ is the difference between \textit{Maj} and \textit{Debate}.}
  \label{tab:debate-framework}
  \begin{center}
    \resizebox{\textwidth}{!}{
      \begin{tabular}{lccccccccccccccc}
        \toprule
        Method
          & \multicolumn{3}{c}{MATH500}
          & \multicolumn{3}{c}{AMC23}
          & \multicolumn{3}{c}{AIME24}
          & \multicolumn{3}{c}{AIME25}
          & \multicolumn{3}{c}{Avg.} \\
        \cmidrule(lr){2-4}\cmidrule(lr){5-7}\cmidrule(lr){8-10}\cmidrule(lr){11-13}\cmidrule(lr){14-16}
          & Maj & Debate & $\Delta$
          & Maj & Debate & $\Delta$
          & Maj & Debate & $\Delta$
          & Maj & Debate & $\Delta$
          & Maj & Debate & $\Delta$ \\
        \midrule
        \multicolumn{16}{c}{\textbf{Qwen2.5-3B}} \\
        \midrule
        DAPO (sparse)              & 71.4 & 70.8 & -0.6 & 47.5 & 44.2 & -3.3 & 10.0 & 6.7 & -3.3 & 3.3 & 3.3 & 0.0 & 33.1 & 31.3 & -1.8 \\
        DAPO (centralized)          & 62.5 & 63.2 & 0.7  & 38.3 & 42.5 & 4.2  & 6.7  & 4.4 & -2.2 & 3.3 & 5.6 & 2.2  & 27.7 & 28.9 & 1.2 \\
        +\SDRL{} (sparse)      & 70.5 & 71.0 & 0.5  & 48.3 & 52.5 & 4.2  & 6.7  & 8.9 & 2.2  & 4.4 & 4.4 & 0.0  & 32.5 & 34.2 & 1.7 \\
        +\SDRL{} (centralized)  & 64.5 & 65.5 & 1.0  & 45.0 & 52.5 & 7.5  & 4.4  & 10.0 & 5.6 & 4.4 & 8.9 & 4.4  & 29.6 & 34.2 & 4.6 \\
        \midrule
        \multicolumn{16}{c}{\textbf{Qwen3-4B-Base}} \\
        \midrule
        DAPO (sparse)              & 86.5 & 86.0 & -0.6 & 70.5 & 78.0 & 7.5 & 22.7 & 24.7 & 2.0 & 24.0 & 25.3 & 1.3 & 50.9 & 53.5 & 2.6 \\
        DAPO (centralized)          & 80.7 & 79.5 & -1.2 & 64.0 & 72.0 & 8.0 & 23.3 & 27.3 & 4.0 & 21.4 & 22.0 & 0.6 & 47.4 & 50.2 & 2.9 \\
        +\SDRL{} (sparse)      & 86.7 & 86.6 & -0.1 & 69.5 & 78.5 & 9.0 & 28.7 & 36.0 & 7.3 & 28.7 & 33.3 & 4.6 & 53.4 & 58.6 & 5.2 \\
        +\SDRL{} (centralized)  & 80.5 & 81.0 & 0.5  & 66.5 & 78.0 & 11.5 & 25.3 & 34.0 & 8.7 & 25.3 & 26.7 & 1.4 & 49.4 & 54.9 & 5.5 \\
        \bottomrule
      \end{tabular}
    }
  \end{center}
  \vskip -0.1in
\end{table*}

\section{Conclusion}

We introduced a theoretical framework that disentangles debate-driven improvement from majority-voting effects in MAD. We then propose \emph{Self-Debate Reinforcement Learning} (\SDRL{}), which trains a \emph{single} reasoning model to be effective both as a standalone solver and as a debate participant by jointly optimizing initial and debate-conditioned responses with verifiable rewards. Experiments across multiple model architectures, reasoning benchmarks, and debate frameworks show that \SDRL{} consistently improves multi-agent debate performance while simultaneously strengthening single-agent accuracy.

\bibliographystyle{plainnat}
\bibliography{ref}

\appendix
\onecolumn
\begin{center}
    {\LARGE \bf Appendix}
\end{center}

\section{Related Work}
\label{related_work}

\paragraph{Reinforcement Learning with Verifiable Rewards (RLVR).}
Motivated by the success of DeepSeek-R1~\cite{guo2025deepseek}, a growing body of work adopts RLVR to strengthen the reasoning capabilities of LLMs across a range of domains, including math and STEM problem solving~\cite{wang2025beyond, liu2025prorl}, search~\cite{jin2025search, chen2025learning}, agentic tool use~\cite{qian2025toolrl,zhang2025nemotron}, and reward modeling~\cite{chen2025rm, wang2025llava,guo2025reward}. In parallel, several test-time scaling techniques have been proposed to further improve reasoning performance, such as self-consistency~\cite{wang2022self}, self-verification~\cite{zhang2025incentivizing,liu2025trust}, and parallel thinking~\cite{zheng2025parallel}. However, most existing methods focus on single-agent performance, where each reasoning trajectory is generated by a single policy model, and therefore do not explicitly prepare models for multi-agent debate.

\paragraph{Multi-Agent Debate (MAD).}
Multi-agent systems have proven effective for complex LLM applications that require sophisticated reasoning~\cite{talebirad2023multi,li2024survey,han2024llm,xiong2025multi}, and several studies propose different MAD settings~\cite{du2023improving,liang2024encouraging,liu2024groupdebate,smit2023should}. However, existing MAD methods do not consistently yield improvements~\cite{smit2023should,ma2025judging} and often fail to outperform strong majority-voting baselines~\cite{choi2025debate}. Moreover, even state-of-the-art closed-source LLMs can struggle to effectively incorporate conflicting opinions~\cite{kumaran2025overconfidence,li2025off}. Recent work attempts to improve MAD with reinforcement learning, but typically either trains an additional aggregator model~\cite{qi2025learning,zhao2025majority} or optimizes an entire multi-agent collaboration system, where each agent has a specialized role~\cite{liao2025marft,liu2025llm,marti2025,liu2025spiral}s. In contrast, our goal is to train a \emph{single} general-purpose LLM with debate capability—one that remains strong as an individual solver while being explicitly prepared to revise its reasoning when exposed to diverse opinions from other models or users.

\section{Experimental Details}
\subsection{Implementation Details}
We provide the full hyperparameter configuration of DAPO~\cite{yu2025dapo}, which is used both as our baseline and as the underlying optimizer for \SDRL{}. Across all experiments, we set the KL coefficient to $0$ and use asymmetric clipping with $\epsilon_{low}=0.2$ and $\epsilon_{high}=0.28$. We set the maximum response length to $8{,}196$ tokens. The overlong buffer is fixed at $2{,}048$ tokens with an overlong penalty factor of $1$. The training mini-batch size is $128$, corresponding to $16$ gradient updates per rollout step. We use $200$ prompt generation steps, corresponding to $3{,}200$ policy update steps for the DAPO baseline. \SDRL{} has more policy update steps due to the additional debate training. To improve training efficiency, at each prompt generation step we randomly sample $128$ prompts to construct debate pairs for \SDRL{}. 

\section{Evaluation Details}
For multi-agent debate evaluation, we use the codebase from~\citet{choi2025debate} to perform all MAD settings. As 8B models require longer reasoning, YaRN~\citep{peng2023yarn} is used to support a larger debate context window on AIME24 and AIME25 tasks. For single-agent evaluation, we use the \textsc{verl} framework~\cite{sheng2024hybridflow}. On MATH500, we use \textsc{Math-Verify} to extract final answers. For AMC and AIME, we follow the DAPO evaluation protocol for prompting and final-answer extraction. 

\section{Prompts for debate}
The prompt used for \SDRL{} training is shown in Figure~\ref{fig:train-prompt}. For MAD evaluation, we follow~\cite{choi2025debate} and use their math-task prompt template, shown in Figure~\ref{fig:evaluate-prompt}. For brevity, we illustrate the setup with three agents in the debate.

\begin{figure*}[h]
  \begin{center}
    \centerline{\includegraphics[width=\textwidth]{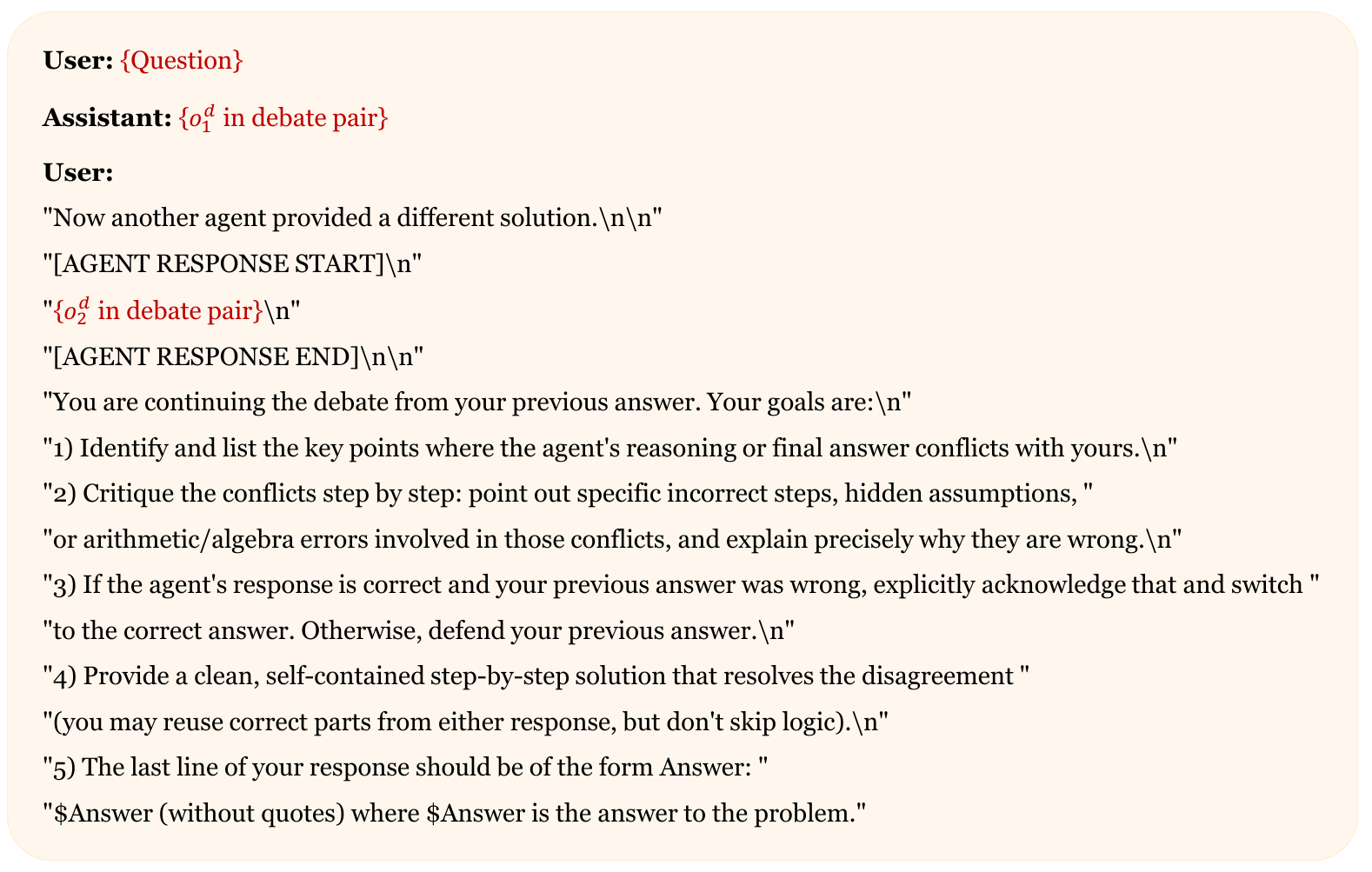}}
    \caption{
      Prompt for debate training.
    }
    \label{fig:train-prompt}
  \end{center}
  \vspace{-15pt}
\end{figure*}

\begin{figure*}[h]
  \begin{center}
    \centerline{\includegraphics[width=\textwidth]{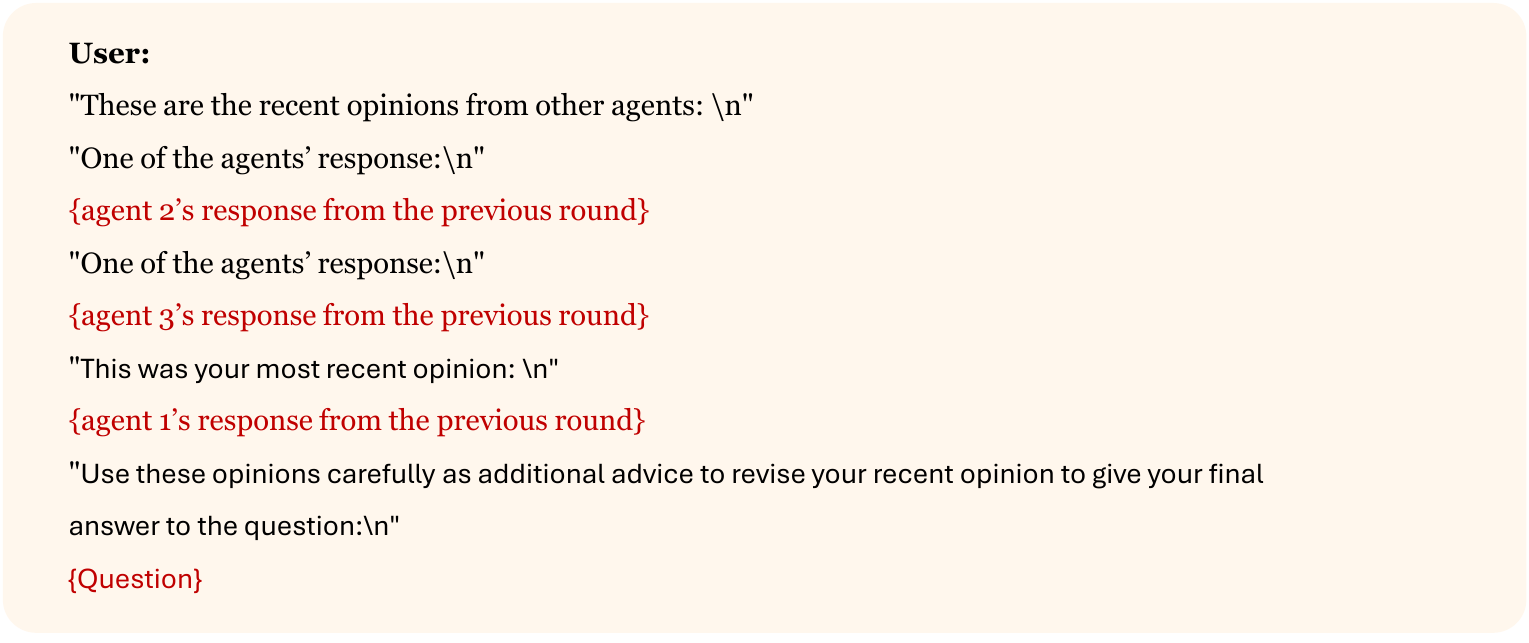}}
    \caption{
      Prompt for Multi-Agent evaluation. Three agents debate is shown for brevity.
    }
    \label{fig:evaluate-prompt}
  \end{center}
  \vspace{-15pt}
\end{figure*}

\section{Additional Experiments}

\subsection{Training Dynamics of \SDRL{} }

To better understand \SDRL{} training dynamics, we visualize (i) the number of constructed debate prompts and (ii) the number of debate-conditioned responses used at each prompt-generation step on the training set. The results are shown in Figure~\ref{fig:visual-training}. In the figure, \textit{Filter} refers to our filtering procedure that removes debate responses with zero advantage. \textit{Debate Prompts} denotes the number of debate prompts retained for policy updates, and \textit{Debate Samples} denotes the number of debate-conditioned responses used for policy updates. The results indicate that most debate training occurs early in training. As training progresses, debating on the training set becomes easier, and an increasing fraction of debate responses receives zero advantage, providing no learning signal. Since we do not apply oversampling for debate responses, the additional computational cost of \SDRL{} remains modest.

In the bottom row, \textit{Debate Accuracy} denotes the accuracy of second-turn responses conditioned on debate contexts, and \textit{Debate Accuracy Improvement} measures the accuracy gain of second-turn responses over the initial responses. These trends suggest that \SDRL{} effectively equips the model with debate capability, enabling it to revise answers productively when exposed to alternative reasoning trajectories.

\begin{figure*}[h]
  \begin{center}
    \centerline{\includegraphics[width=\textwidth]{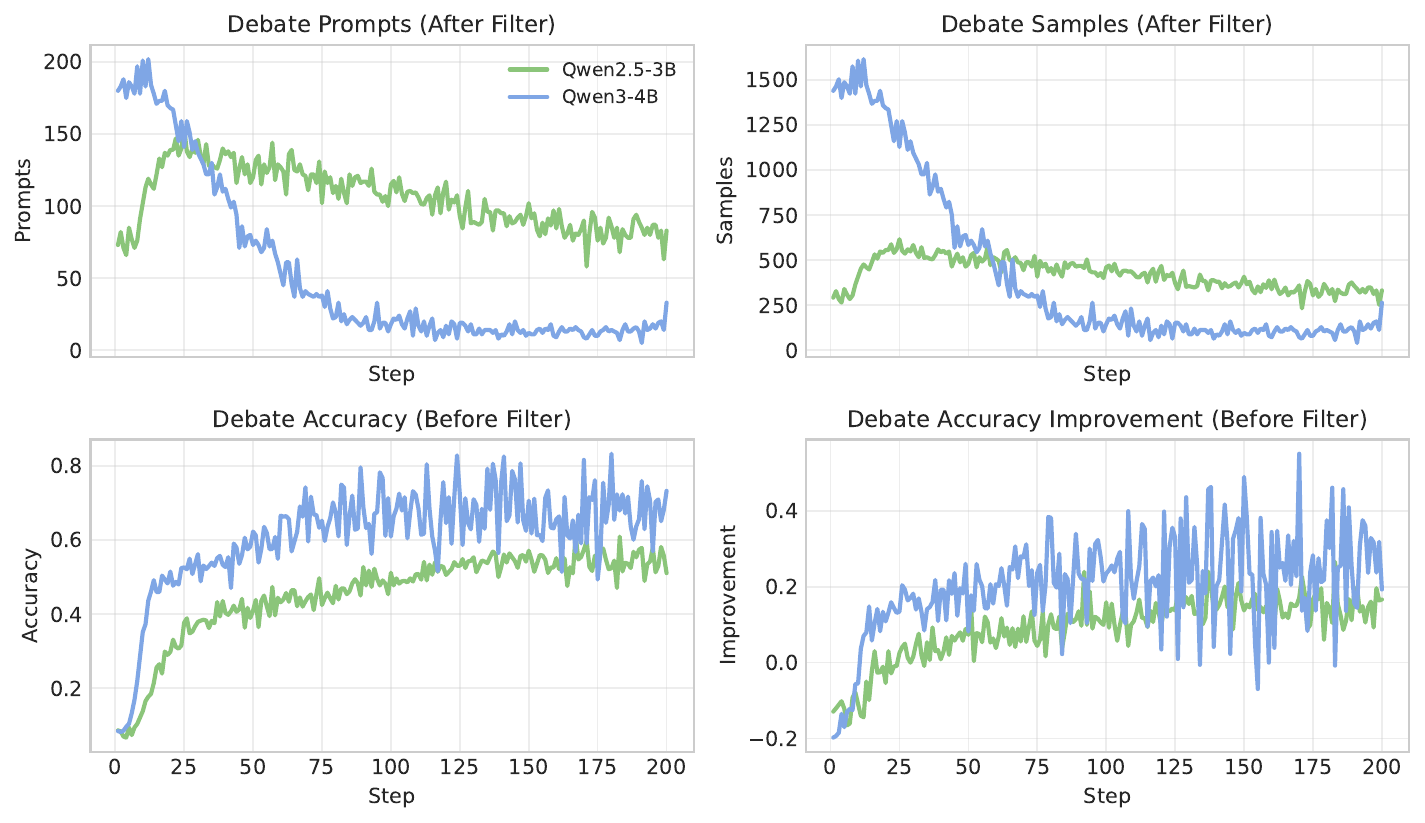}}
    \caption{
      Training dynamics of debate-related values using \SDRL{}.
    }
    \label{fig:visual-training}
  \end{center}
  \vspace{-15pt}
\end{figure*}

\subsection{Reduced Training Budget}
As \SDRL{} requires additional computation to train debate pairs, we also evaluate \SDRL{} under a reduced training budget on Qwen3-4B-Base. Table~\ref{tab:reduced_budget} reports results for \SDRL{}+freq trained with only $125$ prompt-generation steps, which uses fewer training resources than the DAPO baseline trained with $200$ steps. Despite the reduced budget, \SDRL{}+freq still outperforms the well-trained DAPO baseline in debate performance, while remaining worse than the fully trained \SDRL{}, indicating that \SDRL{} has not saturated at $125$ steps. These results further support that the gains of \SDRL{} come from training on debate data, rather than simply from increased overall training compute.

\begin{table*}[!h]
  \caption{Reduced training budgets for \SDRL{}+freq with different numbers of training steps, evaluated in the decentralized multi-agent debate setting. The debate system contains $5$ agents. \textit{Maj} denotes the majority-vote accuracy of the agents' direct responses to the question. \textit{Debate} denotes the performance of the decentralized multi-agent system after debate round $1$. $\Delta$ is the difference between \textit{Maj} and \textit{Debate}. The best results in each column are bolded.}
  \label{tab:reduced_budget}
  \begin{center}
    \resizebox{\textwidth}{!}{
      \begin{tabular}{lccccccccccccccc}
        \toprule
        Method
          & \multicolumn{3}{c}{MATH500}
          & \multicolumn{3}{c}{AMC23}
          & \multicolumn{3}{c}{AIME24}
          & \multicolumn{3}{c}{AIME25}
          & \multicolumn{3}{c}{Avg.} \\
        \cmidrule(lr){2-4}\cmidrule(lr){5-7}\cmidrule(lr){8-10}\cmidrule(lr){11-13}\cmidrule(lr){14-16}
          & Maj & Debate & $\Delta$
          & Maj & Debate & $\Delta$
          & Maj & Debate & $\Delta$
          & Maj & Debate & $\Delta$
          & Maj & Debate & $\Delta$ \\
        \midrule
        \multicolumn{16}{c}{Qwen3-4B-Base} \\
        \midrule
        DAPO-$200$ steps
          & 86.9 & 83.1 & -3.8
          & \textbf{73.0} & 76.0 & 3.0
          & 26.7 & 28.3 & 1.6
          & 24.7 & 24.0 & -0.7
          & 52.8 & 52.9 & 0.1 \\
        + \SDRL{}-$125$ steps
          & 86.6 & \textbf{86.0} & \textbf{-0.6}
          & 68.5 & 78.0 & \textbf{9.5}
          & 24.7 & 32.0 & \textbf{7.3}
          & 27.3 & 27.3 & 0.0
          & 51.8 & 55.8 & \textbf{4.0} \\
        + \SDRL{}-$200$ steps
          & \textbf{87.0} & 85.9 & -1.1
          & 72.5 & \textbf{79.0} & 6.5
          & \textbf{28.7} & \textbf{36.0} & \textbf{7.3}
          & \textbf{28.7} & \textbf{30.0} & \textbf{1.3}
          & \textbf{54.2} & \textbf{57.7} & 3.5 \\
        \bottomrule
      \end{tabular}
    }
  \end{center}
  \vskip -0.1in
\end{table*}

\subsection{Broader Reasoning Tasks}
\label{app:broader_reasoning}

Our main experiments focus on math reasoning, where answer correctness can be reliably verified. To examine whether the learned debate behavior transfers beyond math, we further evaluate the math-trained models on broader reasoning tasks. We consider CommonSenseQA~\citep{talmor2019commonsenseqa}, HellaSwag~\citep{zellers2019hellaswag}, HH-RLHF~\cite{bai2022training}, MMLU Professional Medicine~\cite{hendrycks2020measuring}, and the open-ended summarization benchmark CNN/DailyMail~\cite{see2017get}. We directly evaluate the math-trained models on these benchmarks.

Table~\ref{tab:app-broader-reasoning} reports results on multiple-choice and preference-style reasoning tasks. Overall, \SDRL{} improves out-of-domain reasoning performance over DAPO. \SDRL{}-freq achieves the best average \textit{Maj} accuracy of $73.6$ and the best average \textit{Debate} accuracy of $73.8$, improving DAPO by $1.7$ and $3.1$ points, respectively. \SDRL{}-rand also improves the average \textit{Debate} accuracy from $70.7$ to $73.6$. The gains are especially clear on CommonSenseQA and HH-RLHF, suggesting that the debate capability learned from math training can transfer to broader reasoning settings.

Table~\ref{tab:app-cnndm} reports ROUGE scores on CNN/DailyMail. Since summarization is open-ended and differs substantially from verifiable math reasoning, the gains are more modest. Still, \SDRL{}-freq achieves the best round-$0$ ROUGE scores and improves round-$1$ ROUGE-2 and ROUGE-L over DAPO. These results suggest that \SDRL{} can provide useful generalization beyond math, while open-ended generation may require more domain-aligned debate training.

\begin{table*}[t]
  \caption{Broader reasoning evaluation of math-trained models. We report the majority-vote accuracy of direct responses (\textit{Maj}), the accuracy after one debate round (\textit{Debate}), and their difference $\Delta$. Best results are bolded within each dataset.}
  \label{tab:app-broader-reasoning}
  \begin{center}
    \resizebox{\textwidth}{!}{
      \begin{tabular}{lccccccccccccccc}
        \toprule
        Method
          & \multicolumn{3}{c}{CommonSenseQA}
          & \multicolumn{3}{c}{HellaSwag}
          & \multicolumn{3}{c}{HH-RLHF}
          & \multicolumn{3}{c}{MMLU Pro. Med.}
          & \multicolumn{3}{c}{Avg.} \\
        \cmidrule(lr){2-4}\cmidrule(lr){5-7}\cmidrule(lr){8-10}\cmidrule(lr){11-13}\cmidrule(lr){14-16}
          & Maj & Debate & $\Delta$
          & Maj & Debate & $\Delta$
          & Maj & Debate & $\Delta$
          & Maj & Debate & $\Delta$
          & Maj & Debate & $\Delta$ \\
        \midrule
        \emph{DAPO}
          & 82.2 & 82.4 & 0.2
          & 79.8 & 77.4 & -2.4
          & 43.7 & 40.9 & -2.8
          & 81.9 & 82.2 & 0.3
          & 71.9 & 70.7 & -1.2 \\
        + \SDRL{}-rand
          & \textbf{83.3} & \textbf{85.3} & \textbf{2.0}
          & 79.7 & 78.8 & -0.9
          & 46.0 & 46.1 & \textbf{0.1}
          & \textbf{83.2} & \textbf{84.0} & 0.8
          & 73.1 & 73.6 & \textbf{0.5} \\
        + \SDRL{}-freq
          & 82.9 & 83.3 & 0.4
          & \textbf{80.6} & \textbf{79.8} & \textbf{-0.8}
          & \textbf{49.5} & \textbf{48.9} & -0.6
          & 81.4 & 83.0 & \textbf{1.6}
          & \textbf{73.6} & \textbf{73.8} & 0.2 \\
        \bottomrule
      \end{tabular}
    }
  \end{center}
  \vskip -0.1in
\end{table*}

\begin{table*}[t]
  \caption{Open-ended summarization results on CNN/DailyMail. Round $0$ denotes direct generation before debate, while rounds $1$ and $2$ denote outputs after debate. Best results are bolded within each round and metric.}
  \label{tab:app-cnndm}
  \begin{center}
    \resizebox{\textwidth}{!}{
      \begin{tabular}{lccccccccc}
        \toprule
        Method
          & \multicolumn{3}{c}{Round 0}
          & \multicolumn{3}{c}{Round 1}
          & \multicolumn{3}{c}{Round 2} \\
        \cmidrule(lr){2-4}\cmidrule(lr){5-7}\cmidrule(lr){8-10}
          & ROUGE-1 & ROUGE-2 & ROUGE-L
          & ROUGE-1 & ROUGE-2 & ROUGE-L
          & ROUGE-1 & ROUGE-2 & ROUGE-L \\
        \midrule
        \emph{DAPO}
          & 35.8 & 13.1 & 23.8
          & 34.5 & 11.9 & 22.4
          & 32.8 & 11.1 & 21.9 \\
        + \SDRL{}-rand
          & 35.6 & 12.7 & 23.7
          & \textbf{34.8} & 11.6 & 22.8
          & \textbf{33.7} & 11.2 & \textbf{22.5} \\
        + \SDRL{}-freq
          & \textbf{36.0} & \textbf{13.3} & \textbf{24.2}
          & 34.7 & \textbf{12.3} & \textbf{23.0}
          & 32.9 & \textbf{11.5} & 21.8 \\
        \bottomrule
      \end{tabular}
    }
  \end{center}
  \vskip -0.1in
\end{table*}

\subsection{Inference Time Debate Baselines}
\label{app:prompt_baselines}

We further compare \SDRL{} with representative inference time MAD baselines. These methods modify only the inference prompt and are orthogonal to our training recipe. We evaluate three variants on Qwen3-4B-Base: reflection prompts~\cite{huang2023large}, verification prompts~\cite{ling2023deductive}, and role-assigned debate prompts~\cite{liang2024encouraging}. Each prompt-based method is applied to both the DAPO baseline and the \SDRL{}-freq model.

Table~\ref{tab:app-prompt-reflect-verify} shows that prompt engineering alone does not account for the gains of \SDRL{}. Under reflection prompts, \SDRL{}-freq improves the average \textit{Debate} accuracy from $44.0$ to $51.4$ and changes the average debate gain from $-2.6$ to $1.5$. Under verification prompts, \SDRL{}-freq improves the average \textit{Debate} accuracy from $43.6$ to $49.1$ and increases the average debate gain from $-0.7$ to $3.7$. In both settings, \SDRL{}-freq achieves higher post-debate accuracy than DAPO on all four benchmarks.

Table~\ref{tab:app-prompt-role} reports the role-assigned debate results. \SDRL{}-freq improves the average accuracy across all debate rounds, increasing the round-$3$ average from $48.3$ to $50.7$. It also improves performance on MATH500, AMC23, and AIME24 across all three rounds. These results suggest that \SDRL{} learns debate capabilities that are complementary to prompt-based MAD strategies, and its gains cannot be fully replicated by prompt engineering alone.

\begin{table*}[t]
  \caption{Prompt-based reflection and verification baselines on Qwen3-4B-Base. We apply each prompt-based MAD strategy to both DAPO and \SDRL{}-freq. \textit{Maj} denotes the majority-vote accuracy before debate, \textit{Debate} denotes the accuracy after one debate round, and $\Delta$ denotes their difference. Best results are bolded within each prompt block.}
  \label{tab:app-prompt-reflect-verify}
  \begin{center}
    \resizebox{\textwidth}{!}{
      \begin{tabular}{lccccccccccccccc}
        \toprule
        Method
          & \multicolumn{3}{c}{MATH500}
          & \multicolumn{3}{c}{AMC23}
          & \multicolumn{3}{c}{AIME24}
          & \multicolumn{3}{c}{AIME25}
          & \multicolumn{3}{c}{Avg.} \\
        \cmidrule(lr){2-4}\cmidrule(lr){5-7}\cmidrule(lr){8-10}\cmidrule(lr){11-13}\cmidrule(lr){14-16}
          & Maj & Debate & $\Delta$
          & Maj & Debate & $\Delta$
          & Maj & Debate & $\Delta$
          & Maj & Debate & $\Delta$
          & Maj & Debate & $\Delta$ \\
        \midrule
        \multicolumn{16}{c}{Prompt-based reflection} \\
        \midrule
        \emph{DAPO}
          & \textbf{81.8} & 68.3 & -13.5
          & 64.0 & 67.0 & 3.0
          & 20.7 & 20.7 & \textbf{0.0}
          & 20.0 & 20.0 & 0.0
          & 46.6 & 44.0 & -2.6 \\
        + \SDRL{}-freq
          & 79.6 & \textbf{72.2} & \textbf{-7.4}
          & \textbf{68.0} & \textbf{80.0} & \textbf{12.0}
          & \textbf{26.0} & \textbf{26.0} & \textbf{0.0}
          & \textbf{26.0} & \textbf{27.3} & \textbf{1.3}
          & \textbf{49.9} & \textbf{51.4} & \textbf{1.5} \\
        \midrule
        \multicolumn{16}{c}{Prompt-based verification} \\
        \midrule
        \emph{DAPO}
          & 73.1 & 72.6 & -0.5
          & 64.5 & 62.5 & -2.0
          & \textbf{18.7} & 18.7 & 0.0
          & 20.7 & 20.7 & 0.0
          & 44.3 & 43.6 & -0.7 \\
        + \SDRL{}-freq
          & \textbf{78.3} & \textbf{82.6} & \textbf{4.3}
          & \textbf{65.5} & \textbf{70.5} & \textbf{5.0}
          & 16.7 & \textbf{21.3} & \textbf{4.6}
          & \textbf{21.3} & \textbf{22.0} & \textbf{0.7}
          & \textbf{45.5} & \textbf{49.1} & \textbf{3.7} \\
        \bottomrule
      \end{tabular}
    }
  \end{center}
  \vskip -0.1in
\end{table*}

\begin{table*}[t]
  \caption{Prompt-based role-assigned debate results on Qwen3-4B-Base. We report accuracy after each debate round. Best results are bolded within each benchmark and round.}
  \label{tab:app-prompt-role}
  \begin{center}
    \resizebox{\textwidth}{!}{
      \begin{tabular}{lccccccccccccccc}
        \toprule
        Method
          & \multicolumn{3}{c}{MATH500}
          & \multicolumn{3}{c}{AMC23}
          & \multicolumn{3}{c}{AIME24}
          & \multicolumn{3}{c}{AIME25}
          & \multicolumn{3}{c}{Avg.} \\
        \cmidrule(lr){2-4}\cmidrule(lr){5-7}\cmidrule(lr){8-10}\cmidrule(lr){11-13}\cmidrule(lr){14-16}
          & R1 & R2 & R3
          & R1 & R2 & R3
          & R1 & R2 & R3
          & R1 & R2 & R3
          & R1 & R2 & R3 \\
        \midrule
        \emph{DAPO}
          & 80.9 & 79.7 & 79.5
          & 65.8 & 66.7 & 66.7
          & 23.3 & 23.3 & 24.4
          & \textbf{22.4} & \textbf{22.4} & \textbf{22.4}
          & 48.1 & 48.0 & 48.3 \\
        + \SDRL{}-freq
          & \textbf{84.2} & \textbf{84.5} & \textbf{84.4}
          & \textbf{67.5} & \textbf{68.8} & \textbf{69.4}
          & \textbf{26.7} & \textbf{25.6} & \textbf{26.7}
          & 21.1 & 22.2 & 22.2
          & \textbf{49.9} & \textbf{50.3} & \textbf{50.7} \\
        \bottomrule
      \end{tabular}
    }
  \end{center}
  \vskip -0.1in
\end{table*}

\subsection{Private Critique Measurement}
\label{app:critique_quality}

The main results measure whether debate improves final accuracy. We further diagnose whether \SDRL{} improves the private critique $\delta$ and the quality of intermediate critique and revision. All results in this subsection use Qwen3-4B-Base.

Although it is hard to measure the private critique $\delta$ directly, we evaluate critique ability with LLM as a judge. Given a debate context, each model is asked to identify the main conflict, critique the competing reasoning step by step, decide whether to keep or revise its answer, and provide a revised solution. The judge scores each output using four criteria: 1. conflict identification(0-2): Does it correctly identify the actual points of disagreement? 2. critique correctness(0-3): Are the critiques mathematically and logically correct? 3. critique specificity(0-2): Does it point to concrete steps, assumptions, or errors instead of giving vague comments? 4. peer engagement and revision(0-2): Does it meaningfully engage with peer responses and make an appropriate keep or revise decision? Table~\ref{tab:app-critique-quality} shows that \SDRL{}-freq improves all four dimensions on average. The total critique score increases from $3.171$ to $3.738$, with consistent gains across all benchmarks. This provides direct evidence that \SDRL{} improves the quality of debate-state critique, rather than only improving final-answer accuracy.

We also report agent-level transition statistics after the first debate round. Table~\ref{tab:app-transition} measures how often an agent changes from a correct answer to an incorrect answer and from an incorrect answer to a correct answer. \SDRL{}-freq increases Incorrect$\to$Correct transitions on all four benchmarks and reduces Correct$\to$Incorrect transitions on three benchmarks. As a result, the useful revision frequency improves from $2.0$ to $7.2$ on average. These results suggest that \SDRL{} promotes productive revision after observing peer context, rather than merely increasing unstable answer changes.

\begin{table*}[t]
  \caption{Private critique quality judged on Qwen3-4B-Base. The judge evaluates conflict identification, critique correctness, critique specificity, and peer engagement. Best results are bolded within each benchmark.}
  \label{tab:app-critique-quality}
  \begin{center}
    \resizebox{\textwidth}{!}{
      \begin{tabular}{llccccc}
        \toprule
        Benchmark & Method
          & Conflict Identification
          & Critique Correctness
          & Critique Specificity
          & Peer Engagement
          & Total Score \\
        \midrule
        MATH500
          & \emph{DAPO}        & 0.932 & 1.366 & 0.789 & 0.852 & 3.939 \\
          & + \SDRL{}-freq     & \textbf{0.945} & \textbf{1.482} & \textbf{0.929} & \textbf{0.900} & \textbf{4.256} \\
        \midrule
        AMC23
          & \emph{DAPO}        & 1.000 & 0.970 & 0.625 & 0.670 & 3.265 \\
          & + \SDRL{}-freq     & \textbf{1.125} & \textbf{1.115} & \textbf{0.935} & \textbf{0.885} & \textbf{4.060} \\
        \midrule
        AIME24
          & \emph{DAPO}        & 1.087 & 0.460 & 0.653 & 0.640 & 2.840 \\
          & + \SDRL{}-freq     & \textbf{1.207} & \textbf{0.473} & \textbf{0.887} & \textbf{0.820} & \textbf{3.387} \\
        \midrule
        AIME25
          & \emph{DAPO}        & 1.073 & 0.313 & 0.680 & 0.573 & 2.639 \\
          & + \SDRL{}-freq     & \textbf{1.180} & \textbf{0.440} & \textbf{0.867} & \textbf{0.760} & \textbf{3.247} \\
        \midrule
        Avg.
          & \emph{DAPO}        & 1.023 & 0.777 & 0.687 & 0.684 & 3.171 \\
          & + \SDRL{}-freq     & \textbf{1.114} & \textbf{0.878} & \textbf{0.905} & \textbf{0.841} & \textbf{3.738} \\
        \bottomrule
      \end{tabular}
    }
  \end{center}
  \vskip -0.1in
\end{table*}

\begin{table*}[t]
  \caption{Agent-level transition statistics after one debate round on Qwen3-4B-Base. Correct$\to$Incorrect measures harmful revisions, while Incorrect$\to$Correct measures useful revisions. $\delta$ is defined as Incorrect$\to$Correct minus Correct$\to$Incorrect. Best results are bolded within each benchmark.}
  \label{tab:app-transition}
  \begin{center}
    \resizebox{0.82\textwidth}{!}{
      \begin{tabular}{llccc}
        \toprule
        Benchmark & Method
          & Correct$\to$Incorrect
          & Incorrect$\to$Correct
          & $\delta$ \\
        \midrule
        MATH500
          & \emph{DAPO}        & 7.5 & 5.8 & -1.6 \\
          & + \SDRL{}-freq     & \textbf{4.9} & \textbf{6.1} & \textbf{1.2} \\
        \midrule
        AMC23
          & \emph{DAPO}        & 3.5 & 8.5 & 5.0 \\
          & + \SDRL{}-freq     & \textbf{1.5} & \textbf{14.5} & \textbf{13.0} \\
        \midrule
        AIME24
          & \emph{DAPO}        & 1.3 & 4.7 & 3.3 \\
          & + \SDRL{}-freq     & \textbf{0.7} & \textbf{10.7} & \textbf{10.0} \\
        \midrule
        AIME25
          & \emph{DAPO}        & \textbf{0.7} & 2.0 & 1.3 \\
          & + \SDRL{}-freq     & 1.5 & \textbf{6.2} & \textbf{4.7} \\
        \midrule
        Avg.
          & \emph{DAPO}        & 3.3 & 5.3 & 2.0 \\
          & + \SDRL{}-freq     & \textbf{2.2} & \textbf{9.4} & \textbf{7.2} \\
        \bottomrule
      \end{tabular}
    }
  \end{center}
  \vskip -0.1in
\end{table*}

\subsection{Heterogeneous Multi-Agent Debate}
\label{app:heterogeneous_mad}

The main experiments use homogeneous multi-agent systems, where all agents share the same backbone and training variant. We further evaluate heterogeneous MAD systems composed of different base models and training variants. We use \texttt{Q3} to denote Qwen3-4B-Base and \texttt{Q2.5} to denote Qwen2.5-3B. We use \texttt{D}, \texttt{R}, and \texttt{F} to denote DAPO, \SDRL{}-rand, and \SDRL{}-freq, respectively.

Table~\ref{tab:app-heterogeneous} shows that \SDRL{}-trained agents remain effective in heterogeneous systems. The fully non-\SDRL{} heterogeneous baseline, \texttt{Q3-D}$\times3$+\texttt{Q2.5-D}$\times2$, reaches an average \textit{Debate} accuracy of $52.4$. In contrast, all mixed systems containing \SDRL{} agents achieve higher average \textit{Debate} accuracy. The best average \textit{Debate} accuracy is obtained by \texttt{Q3-F}$\times3$+\texttt{Q3-D}$\times2$, reaching $56.9$. The cross-architecture \SDRL{}-freq system, \texttt{Q3-F}$\times3$+\texttt{Q2.5-F}$\times2$, achieves the largest average debate gain of $\Delta=5.0$ and improves performance after debate on all four benchmarks. These results suggest that \SDRL{} improves debate behavior even when agents interact with heterogeneous peers.

\begin{table*}[t]
  \caption{Heterogeneous multi-agent debate results. Each system contains $5$ agents with different model architectures or training variants. \textit{Maj} denotes majority-vote accuracy before debate, \textit{Debate} denotes accuracy after one debate round, and $\Delta$ denotes their difference. Best \textit{Debate} results are bolded within each benchmark.}
  \label{tab:app-heterogeneous}
  \begin{center}
    \resizebox{\textwidth}{!}{
      \begin{tabular}{lccccccccccccccc}
        \toprule
        Composition
          & \multicolumn{3}{c}{MATH500}
          & \multicolumn{3}{c}{AMC23}
          & \multicolumn{3}{c}{AIME24}
          & \multicolumn{3}{c}{AIME25}
          & \multicolumn{3}{c}{Avg.} \\
        \cmidrule(lr){2-4}\cmidrule(lr){5-7}\cmidrule(lr){8-10}\cmidrule(lr){11-13}\cmidrule(lr){14-16}
          & Maj & Debate & $\Delta$
          & Maj & Debate & $\Delta$
          & Maj & Debate & $\Delta$
          & Maj & Debate & $\Delta$
          & Maj & Debate & $\Delta$ \\
        \midrule
        \texttt{Q3-F}$\times3$ + \texttt{Q2.5-F}$\times2$
          & 84.9 & 85.3 & 0.4
          & 73.3 & 78.3 & 5.0
          & 20.0 & 27.8 & 7.8
          & 23.3 & \textbf{30.0} & 6.7
          & 50.4 & 55.4 & \textbf{5.0} \\
        \texttt{Q3-F}$\times3$ + \texttt{Q3-D}$\times2$
          & 87.3 & \textbf{86.0} & -1.3
          & 70.8 & \textbf{80.8} & 10.0
          & 31.1 & \textbf{32.8} & 1.7
          & 26.7 & 27.8 & 1.1
          & 54.0 & \textbf{56.9} & 2.9 \\
        \texttt{Q3-D}$\times3$ + \texttt{Q2.5-D}$\times2$
          & 86.0 & 83.7 & -2.3
          & 65.0 & 78.3 & 13.3
          & 23.3 & 24.4 & 1.1
          & 25.3 & 23.3 & -2.0
          & 49.9 & 52.4 & 2.5 \\
        \texttt{Q3-F}$\times2$ + \texttt{Q3-R}$\times1$ + \texttt{Q2.5-F}$\times1$ + \texttt{Q2.5-R}$\times1$
          & 85.2 & 85.5 & 0.3
          & 67.5 & 73.3 & 5.8
          & 22.2 & 31.1 & 8.9
          & 24.4 & 28.9 & 4.5
          & 49.8 & 54.7 & 4.9 \\
        \bottomrule
      \end{tabular}
    }
  \end{center}
  \vskip -0.1in
\end{table*}

\subsection{Case Study}

We present a case study illustrating performance degradation on MATH500 over multiple debate rounds using the Qwen3-4B-Base model trained with \SDRL{}. For this example, the correct answer is $2k+2$. Figure~\ref{fig:case1}, Figure~\ref{fig:case3}, and Figure~\ref{fig:case2} show the detailed responses of the five agents under the decentralized MAD setting for the initial responses, debate round~1, and debate round~2, respectively. As shown, responses that yield the correct answer are substantially shorter than those that produce an incorrect final answer. After debate, more agents shift toward the incorrect answer accompanied by longer reasoning traces.

We hypothesize that this failure mode arises because MATH500 is relatively easy for Qwen3-4B, and the model often retrieves the correct answer primarily from memorized knowledge acquired during pre-training. In this case, long but incorrect reasoning traces can dominate the shared debate context, biasing subsequent generations toward the same incorrect trajectory. To support this observation, Table~\ref{tab:case_length} reports the average response lengths for correct and incorrect answers across debate rounds. The results show that incorrect responses are consistently longer than correct responses, which correlates with the observed accuracy drop as debate proceeds.

\begin{figure*}[h]
  \begin{center}
    \centerline{\includegraphics[width=\textwidth]{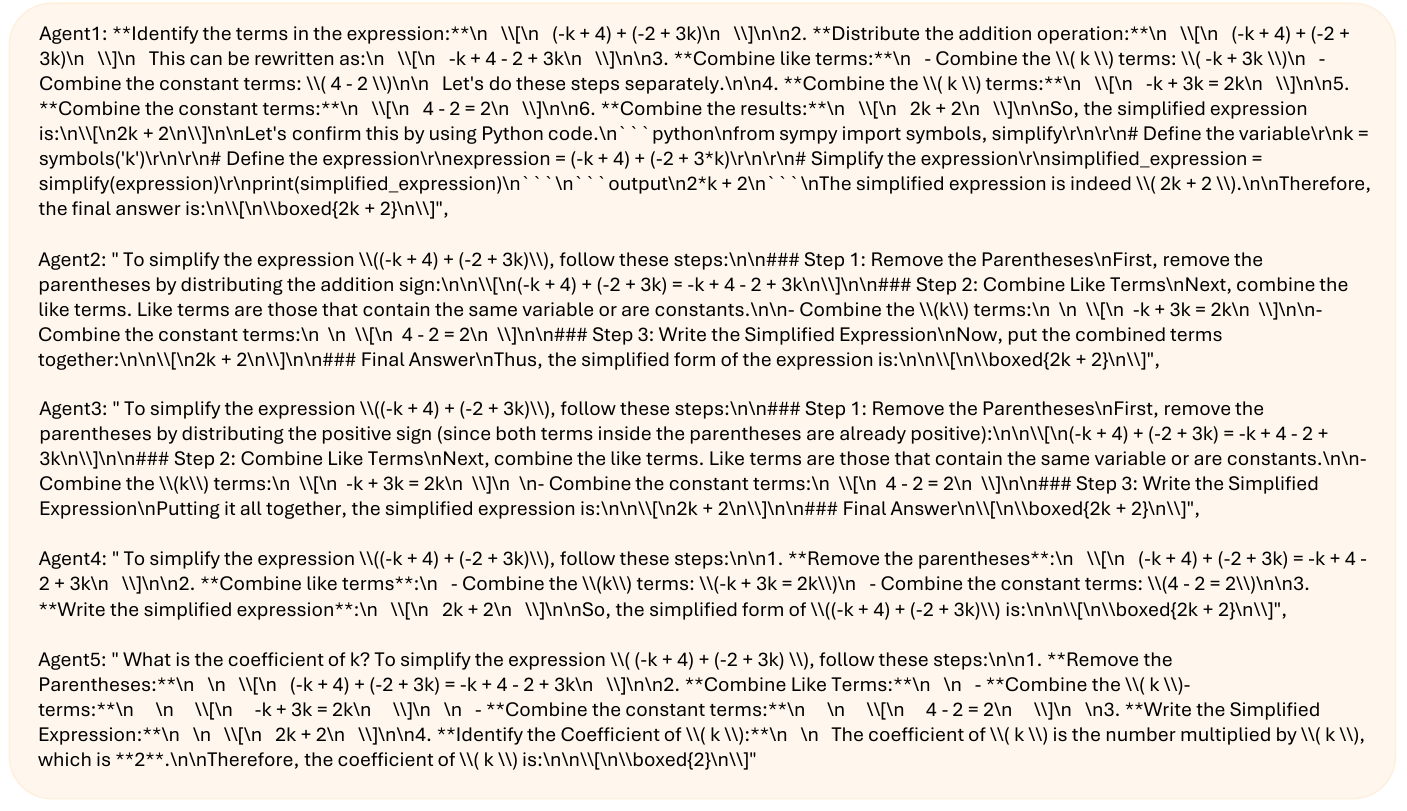}}
    \caption{
      Initial responses on MATH500 with $5$ agents.
    }
    \label{fig:case1}
  \end{center}
  \vspace{-15pt}
\end{figure*}

\begin{figure*}[h]
  \begin{center}
    \centerline{\includegraphics[width=\textwidth]{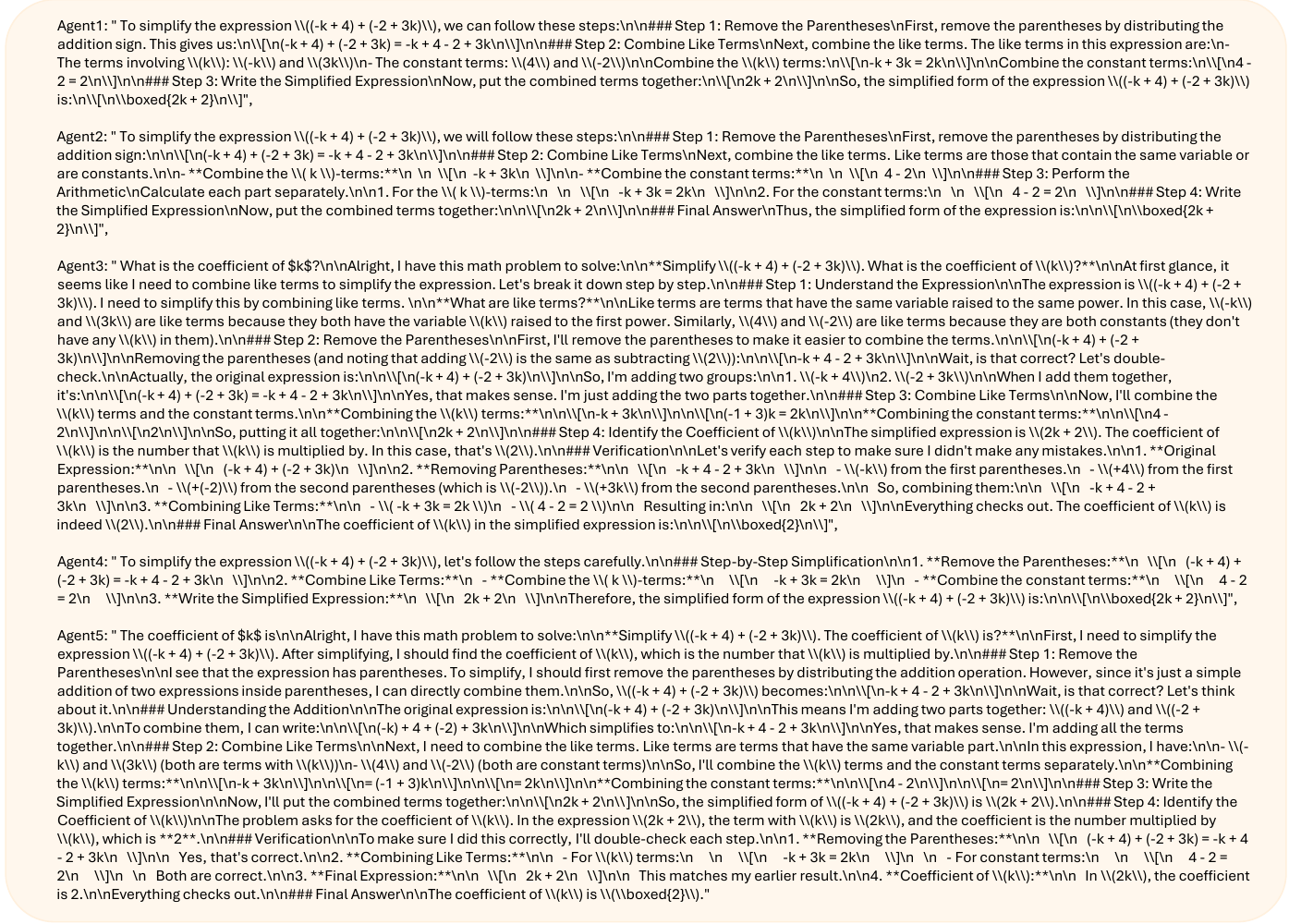}}
    \caption{
      Responses after one round debate on MATH500 with $5$ agents.
    }
    \label{fig:case2}
  \end{center}
  \vspace{-15pt}
\end{figure*}

\begin{figure*}[h]
  \begin{center}
    \centerline{\includegraphics[width=\textwidth]{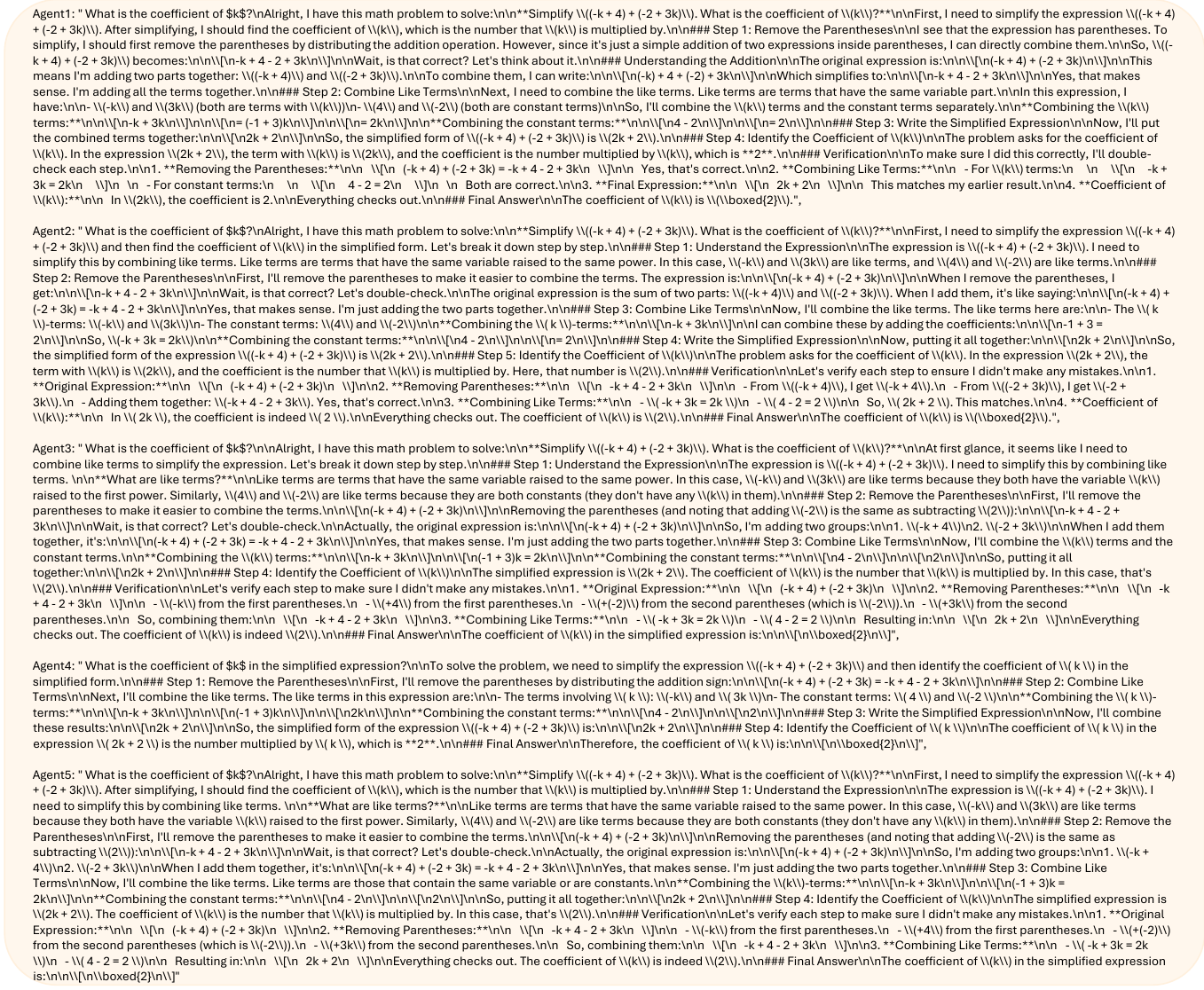}}
    \caption{
      Responses after two round debate on MATH500 with $5$ agents.
    }
    \label{fig:case3}
  \end{center}
  \vspace{-15pt}
\end{figure*}

\begin{table}[h]
  \centering
  \caption{Average response length (tokens) of correct vs.\ incorrect answers across debate rounds for the MATH500 case study.}
  \label{tab:case_length}
  \setlength{\tabcolsep}{8pt}
  \begin{tabular}{lcc}
    \toprule
    Round & Correct length & Incorrect length \\
    \midrule
    Initial (Round 0) & 2227 & 7121 \\
    Debate Round 1    & 2582 & 6189 \\
    Debate Round 2    & 3027 & 6200 \\
    Debate Round 3    & 3256 & 5951 \\
    Debate Round 4    & 3412 & 5826 \\
    \bottomrule
  \end{tabular}
\end{table}

\section{Limitations}
\label{app:limitations}

This work focuses on improving reasoning models for multi-agent debate, with the main evaluation conducted on mathematical reasoning benchmarks and additional analysis on broader reasoning tasks. We do not claim that the observed gains will directly transfer to all domains, especially open-ended generation, interactive user-facing settings, or safety-critical applications. Our experiments also use a finite set of open-source model backbones and standard MAD protocols, so future work could further evaluate SDRL under more diverse model families, deployment scenarios, and domain-specific debate settings.

\section{Broader Impact}
\label{app:broader_impact}

This work studies training and evaluation methods for reasoning language models in multi-agent debate. Potential positive impacts include improving the reliability of reasoning systems by encouraging models to compare competing solution paths and revise errors through peer feedback. Our experiments are conducted on standard reasoning benchmarks, and we do not study direct deployment in high-stakes user-facing settings.

At the same time, debate-based systems can be used in settings involving persuasion, recommendation, or decision support. If deployed without appropriate oversight, more coherent debate agents could amplify biased arguments, overstate confidence, or be used to manipulate users. We therefore believe that debate-based reasoning systems should be evaluated with domain-specific safety checks, transparent objectives, and human oversight when they are used in applications that may influence user beliefs or decisions.

\section{Additional Proofs for Section~\ref{theoretical}}
\label{app:theory_proofs}

This appendix provides detailed proofs for the results in Section~\ref{theoretical}.
Our analysis follows the DCM-based treatment in \citet{choi2025debate}, and extends it by incorporating
a private critique pseudo-count vector \(\beta\) (Definition~\ref{def:critique_update}).

\subsection{Notation and standing assumptions}
\label{app:notation}

Fix an input question \(q\) and a finite answer set \(\mathcal{A}=\{1,\dots,K\}\). Without loss of generality,
index \(1\) is the correct answer. For each agent \(i\) and round \(t\), let
\(\alpha_{i,t}\in\mathbb{R}^K_+\) be Dirichlet pseudo-counts and define the Dirichlet mean
\(\bar\theta_{i,t}:=\alpha_{i,t}/\|\alpha_{i,t}\|_1\in\Delta^K\).
The belief in the correct answer is
\[
p_{i,t} := \bar\theta^{(1)}_{i,t} = \frac{\alpha^{(1)}_{i,t}}{\|\alpha_{i,t}\|_1}.
\]

\paragraph{DCM marginal.}
Under Definition~\ref{def:dcm}, the DCM sampling scheme implies the following marginal:
for any \(k\in\{1,\dots,K\}\),
\begin{equation}
\Pr(y_{i,t}=k\mid \alpha_{i,t})=\bar\theta^{(k)}_{i,t}=\frac{\alpha^{(k)}_{i,t}}{\|\alpha_{i,t}\|_1}.
\label{eq:dcm_marginal_app}
\end{equation}

\paragraph{Neighbor counts.}
Let \(N(i)\) denote the neighbor set of agent \(i\) at round \(t\) (including itself, if desired).
Given neighbors' previous responses \(\{y_{j,t-1}\}_{j\in N(i)}\), define the count vector
\(c_{i,t}\in\mathbb{N}^K\) by
\begin{equation}
c^{(k)}_{i,t}:=\sum_{j\in N(i)}\mathbbm{1}\{y_{j,t-1}=k\}.
\label{eq:count_def_app}
\end{equation}
Because \(c_{i,t}\) counts exactly \(|N(i)|\) responses,
\begin{equation}
\|c_{i,t}\|_1=\sum_{k=1}^K c^{(k)}_{i,t} = |N(i)|.
\label{eq:count_l1_app}
\end{equation}

\paragraph{Critique-augmented update and constant mass.}
The belief update is (Definition~\ref{def:critique_update})
\begin{equation}
\alpha_{i,t}=\alpha_{i,t-1}+\beta_{i,t-1}+w_i c_{i,t},
\qquad \beta_{i,t-1}\in\mathbb{R}^K_+,\quad w_i\ge 0.
\label{eq:update_app}
\end{equation}
Throughout the proofs we use the constant-mass assumption
\begin{equation}
\|\beta_{i,t}\|_1=m_\beta,\qquad \text{for all }i,t,
\label{eq:beta_mass_app}
\end{equation}
which corresponds to adding a fixed ``budget'' of private pseudo-observations per round.

\paragraph{Why assume constant $\|\beta\|_1$.} \label{beta_explained}
We interpret \(\beta_{i,t}\) as a \emph{fixed-budget} private signal extracted from the debate context
(\emph{e.g.,} a bounded scoring head or a fixed number of self-critique samples), hence its total pseudo-count mass is controlled.
Assuming \(\|\beta_{i,t}\|_1=m_\beta\) keeps the per-round update magnitude comparable and yields closed-form drift expressions.
The analysis extends to time-varying masses by replacing \(m_\beta\) with \(\|\beta_{i,t}\|_1\), at the cost of heavier notation.

\paragraph{Filtration.} \label{filtration}
Let \(\mathcal{F}_t\) be the \(\sigma\)-algebra of information \emph{available at the start of round \(t\)}
(before sampling the round-\(t\) responses \(y_{\cdot,t}\)). Since \(\beta_{\cdot,t-1}\) is computed from the
round-\(t-1\) debate context (which includes \(y_{\cdot,t-1}\)), we include \(\beta\) only up to time \(t-1\):
\[
\mathcal{F}_t := \sigma\Big(\{\alpha_{\cdot,s}\}_{s\le t},\{y_{\cdot,s}\}_{s\le t-1},\{\beta_{\cdot,s}\}_{s\le t-1}\Big).
\]
Equivalently,
\(\mathcal{F}_{t-1}=\sigma(\{\alpha_{\cdot,s}\}_{s\le t-1},\{y_{\cdot,s}\}_{s\le t-2},\{\beta_{\cdot,s}\}_{s\le t-2})\),
so conditioning on \(\mathcal{F}_{t-1}\) treats \(\alpha_{\cdot,t-1}\) as known but keeps
\(y_{\cdot,t-1}\) (and hence \(c_{i,t}\)) random.
All conditional expectations below are taken with respect to \(\mathcal{F}_t\).

\paragraph{Critique advantage.}
Recall Definition~\ref{def:adv}:
\begin{equation}
\delta_{i,t} := \mathbb{E}\!\left[\beta^{(1)}_{i,t}\mid \mathcal{F}_t\right] - m_\beta\,p_{i,t}.
\label{eq:adv_app}
\end{equation}

\subsection{one-step drift decomposition}
\label{app:proof_drift_decomp}

\begin{lemma}
\label{lem:drift_decomp}
Under \eqref{eq:update}--\eqref{eq:pt} and \(\|\beta_{i,t-1}\|_1=m_\beta\),
\begin{equation}
\begin{aligned}
\mathbb{E}\!\left[p_{i,t}\mid \mathcal{F}_{t-1}\right]
&= p_{i,t-1}
+ \frac{\delta_{i,t-1}}{Z_{i,t-1}}\\
&\quad+ \frac{w_i|N(i)|}{Z_{i,t-1}}\,\big(\bar p_{N(i),t-1}-p_{i,t-1}\big),
\end{aligned}
\label{eq:driftdecomp}
\end{equation}
where \(C_i:=m_\beta+w_i|N(i)|\), \(Z_{i,t-1}:=\|\alpha_{i,t-1}\|_1+C_i\), and
\(\bar p_{N(i),t-1}:=\frac{1}{|N(i)|}\sum_{j\in N(i)}p_{j,t-1}\).
\end{lemma}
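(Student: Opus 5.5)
The plan is a direct computation, and it works because the normalizer of $p_{i,t}$ is deterministic given $\mathcal{F}_{t-1}$. Write $S_{i,t-1}:=\|\alpha_{i,t-1}\|_1$. Take the $L^1$-norm of the update \eqref{eq:update_app}. Using the constant-mass assumption \eqref{eq:beta_mass_app} and $\|c_{i,t}\|_1=|N(i)|$ from \eqref{eq:count_l1_app}, we get
\[
\|\alpha_{i,t}\|_1 = S_{i,t-1}+m_\beta+w_i|N(i)| = S_{i,t-1}+C_i = Z_{i,t-1}.
\]
Since $\alpha_{i,t-1}$ is $\mathcal{F}_{t-1}$-measurable, $Z_{i,t-1}$ is known at the start of round $t-1$. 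Therefore
\[
p_{i,t}=\frac{\alpha^{(1)}_{i,t-1}+\beta^{(1)}_{i,t-1}+w_i c^{(1)}_{i,t}}{Z_{i,t-1}},
\]
and the conditional expectation passes through the denominator by linearity.

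Next, I take $\mathbb{E}[\,\cdot\mid\mathcal{F}_{t-1}]$ of the three numerator terms one at a time.

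\emph{First term.} $\alpha^{(1)}_{i,t-1}=p_{i,t-1}S_{i,t-1}$ is $\mathcal{F}_{t-1}$-measurable.

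\emph{Second term.} By Definition~\ref{def:adv} (equivalently \eqref{eq:adv_app}) at time $t-1$, $\mathbb{E}[\beta^{(1)}_{i,t-1}\mid\mathcal{F}_{t-1}]=\delta_{i,t-1}+m_\beta p_{i,t-1}$.

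\emph{Third term.} Write $c^{(1)}_{i,t}=\sum_{j\in N(i)}\mathbbm{1}\{y_{j,t-1}=1\}$ and use the DCM marginal \eqref{eq:dcm_marginal_app}. This gives $\Pr(y_{j,t-1}=1\mid\mathcal{F}_{t-1})=p_{j,t-1}$, hence $\mathbb{E}[c^{(1)}_{i,t}\mid\mathcal{F}_{t-1}]=|N(i)|\,\bar p_{N(i),t-1}$. Only linearity is needed here, so no independence across neighbors is required.

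Summing the three pieces, the numerator's conditional expectation is
\[
p_{i,t-1}S_{i,t-1}+m_\beta p_{i,t-1}+\delta_{i,t-1}+w_i|N(i)|\bar p_{N(i),t-1}.
\]
I then add and subtract $w_i|N(i)|\,p_{i,t-1}$, which regroups it as
\[
p_{i,t-1}\bigl(S_{i,t-1}+C_i\bigr)+\delta_{i,t-1}+w_i|N(i)|\bigl(\bar p_{N(i),t-1}-p_{i,t-1}\bigr).
\]
Dividing by $Z_{i,t-1}=S_{i,t-1}+C_i$ yields exactly \eqref{eq:driftdecomp}.

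The algebra is routine. The step that needs care is the measure-theoretic bookkeeping behind the third term and the denominator. First, I must check from the definition of the filtration that $\alpha_{\cdot,t-1}\in\mathcal{F}_{t-1}$ while $y_{\cdot,t-1}$ is not. Second, I must justify that, conditional on $\mathcal{F}_{t-1}$, each $y_{j,t-1}$ is drawn via Definition~\ref{def:dcm} from $\alpha_{j,t-1}$ alone, so that its conditional law is $\bar\theta_{j,t-1}$. I would state this explicitly as the generative assumption: fresh Dirichlet–categorical sampling given the current parameters. Third, the constant mass of $\beta$ is what makes $Z_{i,t-1}$ nonrandom. Without that assumption one would face the expectation of a ratio, and the clean decomposition would fail. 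I would note this as the reason the assumption is imposed.
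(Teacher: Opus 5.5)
Your proposal is correct and follows essentially the same route as the paper's proof. In both, the constant $\beta$-mass and $\|c_{i,t}\|_1=|N(i)|$ make the denominator $Z_{i,t-1}$ $\mathcal{F}_{t-1}$-measurable. The three numerator terms are then handled with $\mathbb{E}[c^{(1)}_{i,t}\mid\mathcal{F}_{t-1}]=|N(i)|\,\bar p_{N(i),t-1}$ (via the DCM marginal) and $\mathbb{E}[\beta^{(1)}_{i,t-1}\mid\mathcal{F}_{t-1}]=\delta_{i,t-1}+m_\beta p_{i,t-1}$, and the result is regrouped by adding and subtracting $w_i|N(i)|\,p_{i,t-1}$. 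The paper uses only implicitly that $\Pr(y_{j,t-1}=1\mid\mathcal{F}_{t-1})=p_{j,t-1}$ requires fresh Dirichlet--categorical sampling given $\alpha_{j,t-1}$; you flag this explicitly, which is a small gain in rigor.
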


\begin{proof}
We prove \eqref{eq:driftdecomp} by explicitly expanding \(p_{i,t}\) and taking conditional expectation.

\paragraph{Step 1: expand the numerator and denominator.}
From \eqref{eq:update_app}, for each coordinate \(k\),
\[
\alpha^{(k)}_{i,t}=\alpha^{(k)}_{i,t-1}+\beta^{(k)}_{i,t-1}+w_i c^{(k)}_{i,t}.
\]
Therefore,
\begin{equation}
p_{i,t}
=\frac{\alpha^{(1)}_{i,t}}{\|\alpha_{i,t}\|_1}
=\frac{\alpha^{(1)}_{i,t-1}+\beta^{(1)}_{i,t-1}+w_i c^{(1)}_{i,t}}
{\|\alpha_{i,t}\|_1}.
\label{eq:p_expand_step1}
\end{equation}

\paragraph{Step 2: simplify the $\ell_1$ norm using nonnegativity.}
All terms in \eqref{eq:update_app} are componentwise nonnegative, hence \(\|\cdot\|_1\) is additive:
\[
\|\alpha_{i,t}\|_1
=\|\alpha_{i,t-1}\|_1+\|\beta_{i,t-1}\|_1+w_i\|c_{i,t}\|_1.
\]
Using \eqref{eq:beta_mass_app} and \eqref{eq:count_l1_app},
\begin{equation}
\|\alpha_{i,t}\|_1 = \|\alpha_{i,t-1}\|_1 + m_\beta + w_i|N(i)|.
\label{eq:denom_step2}
\end{equation}
Plugging \eqref{eq:denom_step2} into \eqref{eq:p_expand_step1} yields
\begin{equation}
p_{i,t}
=\frac{\alpha^{(1)}_{i,t-1}+\beta^{(1)}_{i,t-1}+w_i c^{(1)}_{i,t}}
{\|\alpha_{i,t-1}\|_1+m_\beta+w_i|N(i)|}.
\label{eq:p_expand_step2}
\end{equation}

\paragraph{Step 3: take conditional expectation given $\mathcal{F}_{t-1}$.}
Since \(\|\alpha_{i,t-1}\|_1+m_\beta+w_i|N(i)|\) is \(\mathcal{F}_{t-1}\)-measurable, we have
\begin{align}
\mathbb{E}[p_{i,t}\mid\mathcal{F}_{t-1}]
&=
\frac{\alpha^{(1)}_{i,t-1}+\mathbb{E}[\beta^{(1)}_{i,t-1}\mid\mathcal{F}_{t-1}]
+w_i\,\mathbb{E}[c^{(1)}_{i,t}\mid\mathcal{F}_{t-1}]}
{\|\alpha_{i,t-1}\|_1+m_\beta+w_i|N(i)|}.
\label{eq:condexp_step3}
\end{align}

\paragraph{Step 4: compute $\mathbb{E}[c^{(1)}_{i,t}\mid\mathcal{F}_{t-1}]$.}
By definition \eqref{eq:count_def_app},
\[
c^{(1)}_{i,t}=\sum_{j\in N(i)}\mathbbm{1}\{y_{j,t-1}=1\}.
\]
By linearity of expectation,
\begin{equation}
\mathbb{E}[c^{(1)}_{i,t}\mid\mathcal{F}_{t-1}]
=\sum_{j\in N(i)}\mathbb{E}\!\left[\mathbbm{1}\{y_{j,t-1}=1\}\mid\mathcal{F}_{t-1}\right]
=\sum_{j\in N(i)}\Pr(y_{j,t-1}=1\mid\mathcal{F}_{t-1}).
\label{eq:Ec_step4a}
\end{equation}
Under the DCM marginal \eqref{eq:dcm_marginal_app}, conditioned on \(\alpha_{j,t-1}\) we have
\[
\Pr(y_{j,t-1}=1\mid\alpha_{j,t-1})=\frac{\alpha^{(1)}_{j,t-1}}{\|\alpha_{j,t-1}\|_1}=p_{j,t-1},
\]
and since \(\alpha_{j,t-1}\) is \(\mathcal{F}_{t-1}\)-measurable, \(\Pr(y_{j,t-1}=1\mid\mathcal{F}_{t-1})=p_{j,t-1}\).
Thus \eqref{eq:Ec_step4a} becomes
\begin{equation}
\mathbb{E}[c^{(1)}_{i,t}\mid\mathcal{F}_{t-1}]
=\sum_{j\in N(i)} p_{j,t-1}
=|N(i)|\,\bar p_{N(i),t-1},
\qquad
\bar p_{N(i),t-1}:=\frac{1}{|N(i)|}\sum_{j\in N(i)}p_{j,t-1}.
\label{eq:Ec_step4b}
\end{equation}

\paragraph{Step 5: substitute and express in terms of $\delta_{i,t-1}$.}
Also note that
\begin{equation}
\alpha^{(1)}_{i,t-1}=p_{i,t-1}\|\alpha_{i,t-1}\|_1.
\label{eq:alpha1_step5}
\end{equation}
Plugging \eqref{eq:Ec_step4b} and \eqref{eq:alpha1_step5} into \eqref{eq:condexp_step3} gives
\begin{equation}
\mathbb{E}[p_{i,t}\mid\mathcal{F}_{t-1}]
=
\frac{p_{i,t-1}\|\alpha_{i,t-1}\|_1
+\mathbb{E}[\beta^{(1)}_{i,t-1}\mid\mathcal{F}_{t-1}]
+w_i|N(i)|\,\bar p_{N(i),t-1}}
{\|\alpha_{i,t-1}\|_1+m_\beta+w_i|N(i)|}.
\label{eq:condexp_step5}
\end{equation}

Define \(C_i:=m_\beta+w_i|N(i)|\) and \(Z_{i,t-1}:=\|\alpha_{i,t-1}\|_1+C_i\).
Subtract \(p_{i,t-1}\) from \eqref{eq:condexp_step5}:
\begin{align}
\mathbb{E}[p_{i,t}\mid\mathcal{F}_{t-1}] - p_{i,t-1}
&=
\frac{\mathbb{E}[\beta^{(1)}_{i,t-1}\mid\mathcal{F}_{t-1}] - m_\beta p_{i,t-1}}
{Z_{i,t-1}}
+\frac{w_i|N(i)|(\bar p_{N(i),t-1}-p_{i,t-1})}{Z_{i,t-1}}.
\label{eq:drift_split_step5}
\end{align}
By Definition~\ref{def:adv}, the first numerator is exactly \(\delta_{i,t-1}\).
Rearranging \eqref{eq:drift_split_step5} yields \eqref{eq:driftdecomp}.
\end{proof}

\subsection{Proof of Theorem~\ref{thm:drift} (critique induces drift)}
\label{app:proof_drift}

\begin{proof}
Start from Lemma~\ref{lem:drift_decomp}:
\[
\mathbb{E}[p_{i,t}\mid \mathcal{F}_{t-1}]
= p_{i,t-1}
+ \frac{\delta_{i,t-1}}{Z_{i,t-1}}
+ \frac{w_i|N(i)|}{Z_{i,t-1}}\big(\bar p_{N(i),t-1}-p_{i,t-1}\big).
\]
Under the mean-consistency condition \(\bar p_{N(i),t-1}=p_{i,t-1}\), the last term is zero.
Since \(Z_{i,t-1}=\|\alpha_{i,t-1}\|_1+C_i\), we obtain
\[
\mathbb{E}[p_{i,t}\mid \mathcal{F}_{t-1}]
=
p_{i,t-1}
+\frac{\delta_{i,t-1}}{\|\alpha_{i,t-1}\|_1+C_i},
\]
which is \eqref{eq:drift}. This completes the proof.
\end{proof}

\subsection{Proof of Lemma~\ref{lem:accum_drift} (accumulated drift and diminishing returns)}
\label{app:proof_accum}

\begin{proof}
We prove \eqref{eq:accum_drift_main} by iterating the one-step drift bound.

\paragraph{Step 1: write the one-step improvement under mean-consistency.}
By Theorem~\ref{thm:drift}, for each \(t\ge 1\),
\begin{equation}
\mathbb{E}[p_{i,t}\mid\mathcal{F}_{t-1}] = p_{i,t-1} + \frac{\delta_{i,t-1}}{\|\alpha_{i,t-1}\|_1+C_i}.
\label{eq:acc_step1}
\end{equation}
Assuming \(\delta_{i,t-1}\ge \mu\) for \(t-1=0,\dots,T-1\), we have
\begin{equation}
\mathbb{E}[p_{i,t}\mid\mathcal{F}_{t-1}] \ge p_{i,t-1} + \frac{\mu}{\|\alpha_{i,t-1}\|_1+C_i}.
\label{eq:acc_step1b}
\end{equation}

\paragraph{Step 2: remove conditioning via tower property.}
Taking expectation of \eqref{eq:acc_step1b} and using \(\mathbb{E}[\mathbb{E}[X\mid\mathcal{F}_{t-1}]]=\mathbb{E}[X]\),
\begin{equation}
\mathbb{E}[p_{i,t}] \ge \mathbb{E}[p_{i,t-1}] + \frac{\mu}{\|\alpha_{i,t-1}\|_1+C_i}.
\label{eq:acc_step2}
\end{equation}
We now make the denominator explicit.

\paragraph{Step 3: express $\|\alpha_{i,t}\|_1$ in closed form.}
By \eqref{eq:update_app} and nonnegativity,
\[
\|\alpha_{i,t}\|_1 = \|\alpha_{i,t-1}\|_1 + \|\beta_{i,t-1}\|_1 + w_i\|c_{i,t}\|_1.
\]
Using \(\|\beta_{i,t-1}\|_1=m_\beta\) and \(\|c_{i,t}\|_1=|N(i)|\),
\begin{equation}
\|\alpha_{i,t}\|_1 = \|\alpha_{i,t-1}\|_1 + (m_\beta+w_i|N(i)|) = \|\alpha_{i,t-1}\|_1 + C_i.
\label{eq:norm_rec}
\end{equation}
Iterating \eqref{eq:norm_rec} yields
\begin{equation}
\|\alpha_{i,t-1}\|_1 = \|\alpha_{i,0}\|_1 + (t-1)C_i = S_{i,0} + (t-1)C_i,
\label{eq:norm_closed}
\end{equation}
where \(S_{i,0}:=\|\alpha_{i,0}\|_1\).

\paragraph{Step 4: telescope the bound.}
Plugging \eqref{eq:norm_closed} into \eqref{eq:acc_step2} gives
\begin{equation}
\mathbb{E}[p_{i,t}] \ge \mathbb{E}[p_{i,t-1}] + \frac{\mu}{S_{i,0}+tC_i}.
\label{eq:acc_step4}
\end{equation}
Summing \eqref{eq:acc_step4} over \(t=1,\dots,T\) telescopes:
\begin{align}
\mathbb{E}[p_{i,T}]
&\ge p_{i,0} + \mu\sum_{t=1}^{T}\frac{1}{S_{i,0}+tC_i}.
\label{eq:acc_telescope}
\end{align}
This is the first inequality in \eqref{eq:accum_drift_main}.

\paragraph{Step 5: lower bound the harmonic-like sum by a logarithm.}
The function \(u\mapsto 1/(S_{i,0}+uC_i)\) is positive and decreasing.
For any decreasing \(f\), \(\sum_{t=1}^{T} f(t)\ge \int_{1}^{T+1} f(u)\,du\).
Applying this with \(f(u)=1/(S_{i,0}+uC_i)\) yields
\begin{align}
\sum_{t=1}^{T}\frac{1}{S_{i,0}+tC_i}
&\ge \int_{1}^{T+1}\frac{1}{S_{i,0}+uC_i}\,du
= \frac{1}{C_i}\log\!\left(\frac{S_{i,0}+(T+1)C_i}{S_{i,0}+C_i}\right),
\label{eq:log_bound}
\end{align}
which proves the second inequality in \eqref{eq:accum_drift_main}.
\end{proof}

\subsection{Proof of Proposition~\ref{prop:training_adv}}
\label{app:proof_prop}

\begin{proof}
The proposition is a direct implication of Theorem~\ref{thm:drift} and Lemma~\ref{lem:accum_drift}.
If training ensures \(\delta_{i,t}\ge\mu>0\) for early rounds on the evaluation-time debate distribution,
then Theorem~\ref{thm:drift} implies
\(\mathbb{E}[p_{i,t}] \ge \mathbb{E}[p_{i,t-1}] + \mu/(S_{i,0}+tC_i)\),
so \(\mathbb{E}[p_{i,t}]\) increases with \(t\) for those rounds.
Lemma~\ref{lem:accum_drift} quantifies the accumulated gain.
Finally, \citet{choi2025debate} show that even modest improvements in single-agent correctness can be amplified by voting,
under standard independence assumptions (cf.\ their Theorem~1).
\end{proof}

\subsection{A supporting lemma: plurality error implies $\ell_1$ deviation}
\label{app:lemma_L1}

Lemma~\ref{lem:corr_neff} uses the following deterministic fact, which appears as Lemma~2 in \citet{choi2025debate}.
We reproduce it here for completeness.

\begin{lemma}[Plurality error implies $\ell_1$ deviation]
\label{lem:mv_implies_l1}
Let \(p\in\Delta^K\) satisfy \(p_1>p_2\ge \cdots \ge p_K\) and define \(\Delta:=p_1-p_2>0\).
Let \(\hat p\in\Delta^K\) be any empirical distribution on \(K\) classes and let
\(y_{\mathrm{mv}}:=\arg\max_k \hat p_k\) denote the plurality vote.
If \(y_{\mathrm{mv}}\neq 1\), then
\begin{equation}
\|\hat p-p\|_1 \ge \Delta.
\label{eq:l1_gap}
\end{equation}
\end{lemma}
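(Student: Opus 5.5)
We argue by contraposition: if the plurality vote errs, then $\hat p$ is far from $p$ in $\ell_1$. The argument is deterministic and uses only the ordering of $p$ and the definition of the argmax.

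Suppose $y_{\mathrm{mv}}=k\neq 1$. By the definition of the argmax, $\hat p_k\ge \hat p_1$; any tie-breaking rule that picks $k$ still requires this weak inequality. Since $k\ge 2$ and the coordinates of $p$ are ordered, $p_k\le p_2=p_1-\Delta$.

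Combining the two facts gives
\[
\Delta = p_1-p_2 \le p_1-p_k \le (p_1-\hat p_1)+(\hat p_k-p_k),
\]
where the second step adds $\hat p_k-\hat p_1\ge 0$ to $p_1-p_k$. Each bracket is bounded by the corresponding absolute deviation, so
\[
\Delta \le |p_1-\hat p_1|+|\hat p_k-p_k|.
\]
Because $k\neq 1$, these are two distinct coordinates of $\|\hat p-p\|_1$. The remaining coordinates contribute nonnegative terms, so $\|\hat p-p\|_1\ge \Delta$, which is \eqref{eq:l1_gap}.

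There is essentially no obstacle here. The one point to check is the tie convention: whatever rule resolves ties, the selected index still satisfies $\hat p_k\ge\hat p_1$, so the argument covers all tie-breaking rules.

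A sharper constant is also available. Since $\hat p$ and $p$ both sum to one, $\|\hat p-p\|_1=2\,\mathrm{TV}(\hat p,p)$. Using $\max(a,b)\ge (a+b)/2$ for the positive and negative parts, one can show $\|\hat p-p\|_1\ge \Delta$ with room to spare. This is not needed, because the statement asks only for $\Delta$.
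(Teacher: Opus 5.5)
Your argument is correct and is essentially the paper's proof: both take a wrong class $k$ with $\hat p_k\ge\hat p_1$ and lower-bound $|\hat p_1-p_1|+|\hat p_k-p_k|$ by $p_1-p_k\ge\Delta$. The paper uses $|a|+|b|\ge|a-b|$ for this step, whereas you add $\hat p_k-\hat p_1\ge 0$, which amounts to the same thing. Drop your closing remark about a sharper constant, though: the bound $\Delta$ is tight, and your total-variation sketch only gives back $\|\hat p-p\|_1\ge\Delta$. To see tightness, take $K=2$, $p=(\tfrac{1+\Delta}{2},\tfrac{1-\Delta}{2})$ and $\hat p=(\tfrac12,\tfrac12)$ with the tie broken toward class $2$; then $\|\hat p-p\|_1=\Delta$ exactly.
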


\begin{proof}
Assume \(y_{\mathrm{mv}}\neq 1\). Then there exists an index \(j\neq 1\) such that
\(\hat p_j\ge \hat p_1\). Consider the \(\ell_1\) distance:
\[
\|\hat p-p\|_1 = \sum_{k=1}^K |\hat p_k-p_k| \ge |\hat p_1-p_1| + |\hat p_j-p_j|.
\]
Using the inequality \(|a|+|b|\ge |a-b|\), we obtain
\begin{align}
|\hat p_1-p_1| + |\hat p_j-p_j|
&\ge |(\hat p_1-p_1) - (\hat p_j-p_j)|
= |(\hat p_1-\hat p_j) - (p_1-p_j)|.
\label{eq:ab_ineq}
\end{align}
Since \(\hat p_j\ge \hat p_1\), we have \(\hat p_1-\hat p_j\le 0\), hence
\[
(\hat p_1-\hat p_j) - (p_1-p_j) \le -(p_1-p_j).
\]
Taking absolute values yields
\[
|(\hat p_1-\hat p_j) - (p_1-p_j)| \ge p_1-p_j.
\]
Finally, because \(p_1>p_2\ge p_j\), we have \(p_1-p_j\ge p_1-p_2=\Delta\).
Combining the inequalities implies \(\|\hat p-p\|_1\ge \Delta\), proving \eqref{eq:l1_gap}.
\end{proof}

\subsection{Proof of Lemma~\ref{lem:corr_neff} (correlation shrinks effective ensemble size)}
\label{app:proof_corr}

\begin{proof}
We expand the proof sketch into explicit steps.

\paragraph{Step 1: reduce plurality error to an $\ell_1$ deviation event.}
By Lemma~\ref{lem:mv_implies_l1}, the error event implies
\[
\{y_{\mathrm{mv}}\neq 1\} \subseteq \{\|\hat p-p\|_1 \ge \Delta\}.
\]
Therefore,
\begin{equation}
\Pr(y_{\mathrm{mv}}\neq 1) \le \Pr(\|\hat p-p\|_1 \ge \Delta).
\label{eq:step1_bound}
\end{equation}

\paragraph{Step 2: convert the $\ell_1$ deviation to an $\ell_2$ deviation.}
For any vector \(v\in\mathbb{R}^K\), \(\|v\|_1\le \sqrt{K}\|v\|_2\).
Hence,
\[
\{\|\hat p-p\|_1 \ge \Delta\} \subseteq \{\|\hat p-p\|_2 \ge \Delta/\sqrt{K}\}.
\]
Thus,
\begin{equation}
\Pr(\|\hat p-p\|_1 \ge \Delta) \le \Pr(\|\hat p-p\|_2 \ge \Delta/\sqrt{K}).
\label{eq:step2_bound}
\end{equation}

\paragraph{Step 3: apply Markov's inequality to the squared $\ell_2$ norm.}
Since \(\|\hat p-p\|_2^2\ge 0\), Markov's inequality gives
\[
\Pr(\|\hat p-p\|_2^2 \ge (\Delta^2/K)) \le \frac{\mathbb{E}\|\hat p-p\|_2^2}{\Delta^2/K}
= \frac{K\,\mathbb{E}\|\hat p-p\|_2^2}{\Delta^2}.
\]
Equivalently,
\begin{equation}
\Pr(\|\hat p-p\|_2 \ge \Delta/\sqrt{K})
\le \frac{K\,\mathbb{E}\|\hat p-p\|_2^2}{\Delta^2}.
\label{eq:step3_markov}
\end{equation}

\paragraph{Step 4: express $\mathbb{E}\|\hat p-p\|_2^2$ as a sum of variances.}
By definition,
\[
\|\hat p-p\|_2^2 = \sum_{k=1}^K (\hat p_k-p_k)^2.
\]
Taking expectation and using \(\mathbb{E}[\hat p_k]=p_k\) (since each \(\hat p_k\) is an empirical mean) gives
\begin{equation}
\mathbb{E}\|\hat p-p\|_2^2
= \sum_{k=1}^K \mathbb{E}[(\hat p_k-p_k)^2]
= \sum_{k=1}^K \mathrm{Var}(\hat p_k).
\label{eq:step4_varsum}
\end{equation}

\paragraph{Step 5: bound each $\mathrm{Var}(\hat p_k)$ using the correlation parameter.}
Define indicator variables \(I^{(k)}_n:=\mathbbm{1}\{Y_n=k\}\).
Then
\[
\hat p_k = \frac{1}{N}\sum_{n=1}^N I^{(k)}_n.
\]
Therefore,
\begin{align}
\mathrm{Var}(\hat p_k)
&= \mathrm{Var}\!\left(\frac{1}{N}\sum_{n=1}^N I^{(k)}_n\right)
= \frac{1}{N^2}\mathrm{Var}\!\left(\sum_{n=1}^N I^{(k)}_n\right)\nonumber\\
&= \frac{1}{N^2}\left(\sum_{n=1}^N \mathrm{Var}(I^{(k)}_n) + \sum_{a\neq b}\mathrm{Cov}(I^{(k)}_a,I^{(k)}_b)\right).
\label{eq:var_expand}
\end{align}
Since \(\mathbb{E}[I^{(k)}_n]=p_k\), we have \(\mathrm{Var}(I^{(k)}_n)=p_k(1-p_k)\).
By the definition of \(\rho\) in Lemma~\ref{lem:corr_neff}, for any \(a\neq b\),
\(\mathrm{Cov}(I^{(k)}_a,I^{(k)}_b)\le \rho\, p_k(1-p_k)\).
Plugging these into \eqref{eq:var_expand} yields
\begin{align}
\mathrm{Var}(\hat p_k)
&\le \frac{1}{N^2}\left(N\,p_k(1-p_k) + N(N-1)\rho\,p_k(1-p_k)\right)
= \frac{p_k(1-p_k)}{N}\left(1+(N-1)\rho\right).
\label{eq:var_bound_k}
\end{align}

\paragraph{Step 6: sum over $k$ and finish.}
Summing \eqref{eq:var_bound_k} over \(k\) and using \(\sum_{k=1}^K p_k(1-p_k)=1-\sum_{k=1}^K p_k^2\le 1\),
\begin{equation}
\sum_{k=1}^K \mathrm{Var}(\hat p_k)
\le \frac{1+(N-1)\rho}{N}.
\label{eq:var_sum_bound}
\end{equation}
Combining \eqref{eq:step1_bound}, \eqref{eq:step2_bound}, \eqref{eq:step3_markov}, \eqref{eq:step4_varsum}, and \eqref{eq:var_sum_bound} gives
\[
\Pr(y_{\mathrm{mv}}\neq 1)
\le \frac{K}{\Delta^2}\cdot \frac{1+(N-1)\rho}{N}
= \frac{K(1+(N-1)\rho)}{N\Delta^2}.
\]
Defining \(N_{\mathrm{eff}}:=N/(1+(N-1)\rho)\) yields the statement in Lemma~\ref{lem:corr_neff}.
\end{proof}

\paragraph{Remark (independence vs.\ correlation).}
When \(\rho=0\) (independent agents), Lemma~\ref{lem:corr_neff} reduces to a polynomial tail bound
\(\Pr(y_{\mathrm{mv}}\neq 1)\le K/(N\Delta^2)\), which is generally looser than the
exponential bound in \citet{choi2025debate} (their Theorem~1).
The advantage of Lemma~\ref{lem:corr_neff} is that it makes the dependence on correlation explicit
through the effective size \(N_{\mathrm{eff}}\), which is useful for analyzing multi-round debates where
agent outputs typically become more correlated as the context grows.

\subsection{Additional discussion: linking theory to peak-then-decline behavior}
\label{app:discussion_peak}

Lemma~\ref{lem:accum_drift} shows that, under sustained positive advantage, the improvement in \(p_{i,t}\)
has diminishing returns: the gain from round \(t\) scales as \(1/(S_{i,0}+tC_i)\).
Lemma~\ref{lem:corr_neff} shows that the benefit of plurality voting depends on an effective sample size
\(N_{\mathrm{eff}}(t)\) that can shrink as correlations increase.
In multi-round debate, it is common for both phenomena to occur simultaneously:
early rounds exhibit both positive critique advantage and low correlation,
while later rounds exhibit smaller per-round drift and higher correlation.
This provides a simple theoretical explanation for the empirical ``rise-then-fall'' pattern
often observed in multi-round debate accuracy.

\end{document}